\documentclass[10pt]{article}
\usepackage{float}
\usepackage[
    a4paper,
    left=1.3in,
    right=1.3in,
    top=1.15in,
    bottom=1.15in
]{geometry}
\usepackage[T1]{fontenc}
\usepackage{lmodern}
\usepackage{microtype}
\usepackage{natbib}
\usepackage{amsmath,amssymb,mathtools,amsthm}

\usepackage{graphicx}
\usepackage{subcaption}
\usepackage{booktabs}
\usepackage{tabularx}
\usepackage{makecell}

\usepackage{pifont}
\usepackage{xcolor}
\usepackage{xfrac}

\usepackage{setspace}

\usepackage{algorithm}
\usepackage{algorithmic}
\usepackage{authblk}

\usepackage{hyperref}
\usepackage[capitalize,noabbrev]{cleveref}
\hypersetup{
    colorlinks=true,
    linkcolor=blue,
    citecolor=blue,
    urlcolor=blue
}

\usepackage[textsize=tiny]{todonotes}

\usepackage{authblk}

\usepackage{amsthm}

\theoremstyle{plain}
\newtheorem{theorem}{Theorem}[section]

\newtheorem{lemma}[theorem]{Lemma}

\theoremstyle{definition}

\theoremstyle{remark}
\newtheorem{remark}[theorem]{Remark}

\def\reals{\mathbb{R}}
\newcommand{\real}{\mathbb{R}}

\newcommand{\sigmal}{\sigma_L}

\newcommand{\D}{\mathcal{D}}

\newcommand{\K}{\mathcal{K}}
\newcommand{\T}{\mathcal{T}}

\newcommand{\PS}{\mathcal{PS}}

\newcommand{\ba}{\mathbf{a}}
\newcommand{\bb}{\mathbf{b}}

\newcommand{\bd}{\mathbf{d}}

\newcommand{\bg}{\mathbf{g}}

\newcommand{\bbm}{\mathbf{m}}

\newcommand{\bq}{\mathbf{q}}

\newcommand{\bs}{\mathbf{s}}

\newcommand{\bw}{\mathbf{w}}
\newcommand{\bx}{\mathbf{x}}
\newcommand{\by}{\mathbf{y}}
\newcommand{\bz}{\mathbf{z}}
\newcommand{\beps}{\boldsymbol{\epsilon}}

\newcommand{\Exp}[1]{\mathbb{E}#1} 
\newcommand{\ExpB}[1]{\mathbb{E}\left[#1\right]} 
\newcommand{\ExpC}[2]{\mathbb{E}_{#1}\left[#2\right]} 
\newcommand{\norm}[1]{\left\|#1\right\|} 
\newcommand{\normsq}[1]{\left\|#1\right\|^2} 
\newcommand{\dotprod}[2]{\left\langle#1,#2\right\rangle} 
\newcommand{\f}[1]{f\!\left(#1\right)} 
\newcommand{\df}[1]{\nabla f\!\left(#1\right)} 
\newcommand{\proj}[1]{\Pi_\K\left(#1\right)} 
\newcommand{\prob}[1]{\Pr\left\{#1\right\}} 
\newcommand{\oo}[1]{\mathcal{O}\left(#1\right)} 
\newcommand{\ot}[1]{\Theta\left(#1\right)} 
\newcommand{\mini}[1]{\min\left\{#1\right\}} 
\def\al#1\eal{\begin{align}#1\end{align}} 
\def\als#1\eals{\begin{align*}#1\end{align*}} 
\newcommand{\non}{\nonumber\\} 

\title{Bringing Order to Asynchronous SGD: Towards Optimality under Data-Dependent Delays with Momentum}

\author[1]{Tehila Dahan$^*$}
\author[1]{Roie Reshef$^*$}
\author[1]{Sharon Goldstein}
\author[1]{Kfir Y. Levy}

\affil[1]{Technion -- Israel Institute of Technology, Haifa, Israel}
\date{}

\begin{document}
\maketitle
\begingroup
\renewcommand\thefootnote{}
\footnotetext{$^*$Equal contribution.}
\endgroup
\begin{abstract}
Asynchronous stochastic gradient descent (SGD) enables scalable distributed training but suffers from gradient staleness.
Existing mitigation strategies, such as delay-adaptive learning rates and staleness-aware filtering, typically attenuate or discard delayed gradients, introducing systematic bias: updates from simpler or faster-to-process samples are overrepresented, while gradients from more complex samples are delayed or suppressed.
In contrast, prior approaches to data-dependent delays rely on a Lipschitz assumption that yields suboptimal rates or leave the smooth, convex case unaddressed.
We propose a momentum-based asynchronous framework designed to preserve information from delayed gradients while mitigating the effects of staleness.
We establish the first optimal convergence rates for data-dependent delays in both convex and non-convex smooth setups, providing a new result for asynchronous optimization under standard assumptions.
Additionally, we derive robust learning-rate schedules that simplify hyperparameter tuning in practice.
\end{abstract}

\section{Introduction}

The rapid growth of dataset sizes and model complexity has made distributed training indispensable in modern machine learning \citep{verbraeken2020survey,zhao2023survey}.
Among existing scaling strategies, \textit{asynchronous parallelization} is particularly appealing, as it reduces synchronization overhead and improves resource utilization \citep{mishchenko2022asynchronous}.
In this setting, multiple workers independently compute gradients on different mini-batches and update shared model parameters without requiring global synchronization.
This design offers several advantages, including reduced communication costs, elimination of idle time caused by stragglers, and efficient utilization of heterogeneous hardware.
In contrast to synchronous methods, which require all workers to wait for the slowest one before proceeding to the next update \citep{dekel2012optimal}, asynchronous SGD enables continuous updates, yielding higher throughput in wall-clock time and making it well-suited for scalable learning.
These advantages, however, come at a fundamental cost: updates may be delayed and thus stale.
Specifically, by the time a worker applies its computed gradient, the global model may have already evolved, resulting in a discrepancy commonly referred to as \textit{staleness}.

Tight analyses of arbitrary delays typically rely on the assumption that staleness is independent of the data \citep{cohen2021asynchronous,attia2024faster,aviv2021asynchronous,mishchenko2022asynchronous,koloskova2022sharper,shi2024ordered}.
In modern training procedures, delays are inherently data-dependent; the computation time of a mini-batch often scales with the input complexity.
For instance, in language modeling, to maximize hardware efficiency, sequences are often bucketed by length \citep{guo2025deepseek, pouransari2024dataset}, causing batches with longer, more complex sequences to experience significantly higher staleness due to the superlinear scaling of self-attention \citep{vaswani2017attention}.
An analogous phenomenon arises in video understanding, where longer clips impose heavier computational loads.
This coupling between the data distribution and delay biases the optimization process toward easy examples, leading to instability, slower convergence, and degraded generalization.

To mitigate this effect, recent \emph{data-dependent} analyses scale the learning rate with the number of workers $M$ \citep{mishchenko2022asynchronous, koloskova2022sharper, shi2024ordered}.
However, these results remain suboptimal for general $L$-smooth objectives.
Specifically, in non-convex settings, they rely on a $G$-Lipschitz (bounded gradient) assumption that introduces an additional suboptimal term in the convergence rate.
Furthermore, in the smooth-convex setting, a result on data-dependent delays is currently absent from the literature.

More specifically, in the \textit{non-convex} setting, existing data-dependent analyses \citep{mishchenko2022asynchronous} yield the rate
\al\label{eq:non-convex-G}
\oo{\frac{L\Delta M}{T}+\frac{\sigma\sqrt{L\Delta }}{\sqrt{T}}+\left(\frac{GL\Delta M}{T}\right)^{2/3}}
\eal
where $\Delta$ denotes the optimality gap and $\sigma$ denoting the stochastic error.
This rate contains an additional $\oo{\left(\frac{GL\Delta M}{T}\right)^{2/3}}$ term that is suboptimal and arises from the bounded-gradient assumption.

In the \textit{convex} case, existing data-dependent analyses \citep{mishchenko2022asynchronous} yield the rate
\al
\oo{\frac{GD\sqrt{M}}{\sqrt{T}}}
\eal
where $D$ bounds the distance between two feasible points.
This result is limited by its reliance on a $G$-Lipschitz assumption and does not leverage the objective's smoothness.
Under standard smoothness assumptions, the well-known data-\textbf{in}dependent rate is $\oo{\frac{LD^2M}{T}+\frac{\sigma D}{\sqrt{T}}}$.
A critical discrepancy emerges here: in the smooth case, the dominant $\oo{\frac{1}{\sqrt{T}}}$ term is independent of the number of workers $M$, whereas the existing data-dependent rate incurs a $\sqrt{M}$ penalty.
Unfortunately, a result that recovers this $M$-independent dominant term while accounting for data-dependent delays is currently absent from the literature.

From the standpoint of \textit{parallelization}, these bounds fail to capture the acceleration potential of using many workers.
In the non-convex setting, the rate remains disproportionately large relative to the stochastic noise, stifling the attainable speedup (see Remark~\ref{remark:non-convex}).
The discrepancy is even more severe in the convex case, where the $\sqrt{M}$ factor is on the dominant, slow $\oo{\frac{1}{\sqrt{T}}}$ term.
Closing these gaps in a fully data-dependent and smooth setting remains an open challenge that we address in this work.

Under \textit{data-\textbf{in}dependent} assumptions, several existing approaches mitigate the effect of delays by filtering updates based on their staleness.
In particular, some methods adjust the learning rate as a function of the observed delay \citep{mishchenko2022asynchronous,koloskova2022sharper,shi2024ordered,venigalla2020adaptive,barkai2019gap,wu2022delay}, while others employ delay-based filtering thresholds that directly discard excessively stale updates \citep{cohen2021asynchronous,attia2024faster,maranjyan2025ringmaster}.
These techniques downweight delayed gradients and thereby reinforce the bias toward samples that are processed quickly.
Moreover, these methods introduce additional hyperparameters that may be difficult to tune.
For example, \citet{mishchenko2022asynchronous,koloskova2022sharper,shi2024ordered} show that for smooth objectives without Lipschitz continuity, the optimal learning rate is
\al\label{eq:delay-adaptive-sgd}
\eta_t=\mini{\ot{\frac{1}{L\tau_t}},\ot{\frac{1}{LM}},\ot{\sqrt{\frac{\Delta}{L\sigma^2T}}}}
\eal
which depends on the smoothness constant $L$, the current delay $\tau_t$, the optimality gap $\Delta$, and the stochastic error $\sigma$.
Since most of these quantities are generally unknown in practice, it is less practical to evaluate $\eta_t$ at each iteration.
In contrast to a fixed learning rate, which can be tuned once using simple and standard strategies such as a grid search \citep{liashchynskyi2019grid}, these adaptive schedules require continual approximation, making asynchronous training more delicate and tuning-intensive.

\paragraph{Our Contributions}
In this work, we propose a momentum-based framework that resolves the suboptimality induced by data-dependent delays in asynchronous learning.
Our main contributions are:
\begin{itemize}
\item
We provide the first \emph{optimal} convergence rates for asynchronous learning under \emph{data-dependent delays} in both convex and non-convex smooth settings:
\begin{enumerate}
\item  In the smooth non-convex case, prior data-dependent work additionally assumed that the gradients are $G$-bounded, and the rate is suboptimal by having an additional $\oo{\left(\frac{GL\Delta M}{T}\right)^{2/3}}$ term, implying degraded parallelization compared to the optimal (data-independent) bound.
\item In the convex case, optimal results for the smooth case are non-existent.
But under a $G$-bounded gradient assumption, the rate is $\oo{\frac{G\sqrt{M}}{\sqrt{T}}}$.
Unfortunately, the leading term in this bound degrades as the number of machines $M$ increases, implying degraded performance upon parallelization.
Conversely, our new results for the smooth data-dependent case analysis imply that the leading term does not depend on $M$, a nontrivial benefit of parallelization (as is common in smooth-case analysis).
\end{enumerate}
\item
Our guarantees hold with a \emph{fixed learning rate} that can be optimized once via standard offline procedures, such as a single grid search.
In contrast, existing delay-adaptive methods require per-iteration step sizes of the form $\eta_t=\mini{\ot{\frac{1}{L\tau_t}},\ot{\frac{1}{LM}},\ot{\sqrt{\frac{\Delta}{L\sigma^2T}}}}$.
Because this schedule depends on unobservable quantities, it must be re-evaluated at every iteration rather than tuned once globally.
In the convex setting, we leverage a double-momentum mechanism that ensures stable convergence over a broad range of learning rates, further reducing hyperparameter sensitivity and enhancing stability in such noisy asynchronous setups.
\end{itemize}

\section{Related Work}
Asynchronous SGD has long been studied as a means to scale optimization across multiple workers by eliminating synchronization barriers \citep{tsitsiklis2003distributed,mishchenko2022asynchronous}.
Early analyses typically focused on the \emph{fixed-delay} setting \citep{arjevani2020tight,stich2020error,zheng2017asynchronous,giladi2019stability}, where every update has the same staleness.
While this abstraction provides clean guarantees and remains relevant for homogeneous resources, it fails to capture the variability inherent in many practical scenarios, such as training across heterogeneous clusters or edge devices.
To model such settings, other works introduced the \emph{bounded-delay} assumption with a delay bound $\tau_{\max}$ \citep{stich2021critical,lian2015asynchronous,agarwal2011distributed,recht2011hogwild,mania2017perturbed,yanmomentum,feyzmahdavian2016asynchronous,leblond2018improved}.
However, convergence rates under this assumption degrade proportionally with $\tau_{\max}$, which in practice may be very large, even on the order of the total number of iterations $T$, leading to impractical guarantees.

To address these limitations, recent works introduced the idea of \emph{virtual iterates}, first studied under bounded delays \citep{mania2017perturbed} and fixed delays \citep{stich2020error}, and later extended to sharper analysis \citep{mishchenko2022asynchronous,koloskova2022sharper,shi2024ordered}.
The key idea is to analyze the algorithm as if it were updating a delay-free sequence of virtual points, and then bound the discrepancy between these iterates and the actual asynchronous ones.
This abstraction yields sharper guarantees, with convergence rates scaling in the number of workers $M$ rather than $\tau_{\max}$.
Nonetheless, existing results remain limited: some achieve only suboptimal rates in data-dependent settings and rely on bounded-gradient assumptions.
Others employ delay-adaptive methods that adjust the learning rate, but this requires assuming delay–data independence \citep{aviv2021asynchronous, mishchenko2022asynchronous, koloskova2022sharper, shi2024ordered}.
Another line of work filters stale updates \citep{cohen2021asynchronous,attia2024faster,maranjyan2025ringmaster}, which also relies on the same independence assumption.
Both adaptive and filtering strategies bias training toward easier samples either by downweighting or discarding harder ones, while wasting computation and potentially harming generalization.

Momentum methods are a cornerstone of optimization and widely used in modern machine learning systems \citep{polyak1964some,nesterov2013introductory,diederik2014adam,loshchilov2017decoupled}.
Early asynchronous momentum-based approaches focused on restrictive settings, such as bounded delays \citep{barkai2019gap,hakimi2019taming,yanmomentum} or fixed delays \citep{giladi2019stability}, which limit their applicability in variable environments.
More recently, \citet{shi2024ordered} introduced the concept of \emph{ideal momentum}, which adds new gradients with a weight that corresponds to their weight in the synchronous setting.
Their approach further aggregates updates into groups of size $M$ and applies a uniform weight within each group.
Despite these advances, their convergence guarantees remain suboptimal under data-dependent delays and rely on bounded-gradient assumptions.
In contrast, we introduce a refined notion of ideal momentum that, for the first time, achieves tight convergence guarantees under data-dependent delays using a simple, constant learning rate.
To further mitigate sensitivity to learning-rate selection, we analyze the convex case through the double-momentum method \citep{dahanstochastic}, showing that its hallmark stability is preserved: optimality remains robust across a wide range of learning rates, even in highly variable asynchronous environments.

\section{Settings}
We consider the minimization of a smooth objective $f$:
\als
\f{\bx}:=\ExpC{\bz\sim\D}{\f{\bx;\bz}}
\eals
where $\D$ is an unknown data distribution from which we draw i.i.d. samples $\{\bz_t\}_{t=1}^T$.
We focus on iterative first-order optimization methods that access stochastic gradients of $f$ to generate a sequence of iterates $\{\bx_t\}_{t=1}^T$.

We consider a distributed framework with $M$ workers coordinated by a central parameter server ($\PS$).
Each worker can compute a stochastic gradient oracle $\bg\in\real^d$, defined as $\bg:=\df{\bx;\bz}$ with $\bz\sim\D$.
This oracle is unbiased in expectation, i.e., $\ExpB{\bg\mid\bx}=\df{\bx}$.

Our analysis considers the \emph{asynchronous} optimization setting, in which the $\PS$ updates the global model immediately upon receiving an update from any worker.
Such asynchrony introduces delays and can destabilize the learning process, since gradients are computed using stale model parameters and may be applied out of order.
We denote by $\tau_t\geq0$ the delay associated with the update applied at iteration $t$.
Importantly, the delay $\tau_t$ may depend on the underlying data sample $\bz_{t-\tau_t}$, capturing realistic scenarios where computation or communication times vary across samples.

\paragraph{Asynchronous SGD.}
In this asynchronous setting, the standard SGD update \citep{mishchenko2022asynchronous, koloskova2022sharper} takes the form
\al\label{eq:async-sgd}
\bx_{t+1}=\bx_t-\eta_t\bg_{t-\tau_t}
\eal
where $\eta_t>0$ is the learning rate and $\bg_{t-\tau_t}:=\df{\bx_{t-\tau_t};\bz_{t-\tau_t}}$ denotes a stochastic gradient evaluated at a delayed iterate $\bx_{t-\tau_t}$.

\textbf{Notations.}
Throughout the paper, $\norm{\cdot}$ denotes the Euclidean $(\ell_2)$ norm.
For any $N\in\mathbb{N}$, we define $[N]:=\{1,\ldots,N\}$.

\section{Momentum-Based Asynchronous Framework}

Momentum has long been a cornerstone of optimization, valued for its ability to accelerate convergence and stabilize noisy updates in stochastic settings \citep{polyak1964some,nesterov2013introductory,diederik2014adam,loshchilov2017decoupled,jordan6muon}.
At a high level, momentum augments vanilla SGD by maintaining a running average of past gradients, thereby blending historical information with the current update direction.
Formally, in the single-worker setting, the classical momentum method introduced by \citet{polyak1964some} updates the momentum buffer and parameters as
\al\label{eq:single-worker-momentum}
\bbm_t^{\text{syn}}=\beta\bg_t+(1-\beta)\bbm_{t-1}^{\text{syn}}, \quad \bx_{t+1}=\bx_t-\eta\bbm_t^{\text{syn}}
\eal
where $\eta>0$ is the learning rate, $\beta\in(0,1)$ is the momentum parameter, $\bg_t:=\df{\bx_t;\bz_t}$ is the stochastic gradient at step $t$, and $\bbm_t^{\text{syn}}$ denotes the momentum buffer in the single-worker synchronous baseline.

\subsection{Assumptions}\label{sec:assum1}
We impose standard assumptions commonly used in the analysis of \textit{non-convex} stochastic optimization:

Let $f:\real^d\to\real$ be our objective function.

\textbf{Lower Bounded Function Value}:
$f^*:=\underset{\bx\in\real^d}{\inf}\f{\bx}>-\infty.$
We will define $\Delta:=\f{\bx_1}-f^*$.

\textbf{Bounded Variance}:
There exists $\sigma>0$ such that $\forall\bx\in\real^d$: $\Exp{\normsq{\df{\bx;\bz}-\df{\bx}}}\leq\sigma^2$.

\textbf{Smoothness}:
There exist $L>0$ such that $\forall\bx,\by\in\real^d$: $\norm{\df{\bx}-\df{\by}}\leq L\norm{\bx-\by}$.

\subsection{Momentum-Based Solution}
\label{sec:non-convex}
A straightforward extension of the classical momentum method to the asynchronous setting is given by incorporating delayed stochastic gradients into the momentum recursion.
Specifically, at iteration $t$, let $\tau_t\geq0$ denote the delay associated with the stochastic gradient used at that iteration.
The gradient $\bg_{t-\tau_t}:=\df{\bx_{t-\tau_t};\bz_{t-\tau_t}}$ is evaluated at a past iterate $\bx_{t-\tau_t}$.
The naive asynchronous momentum updates are then defined as
\al\label{eq:naive_asy}
\bbm^{\text{asy}}_t=\beta\bg_{t-\tau_t}+(1-\beta)\bbm^{\text{asy}}_{t-1}
\eal
where $\bbm^{\text{asy}}_t$ denotes the momentum buffer maintained under asynchronous updates.
This formulation reduces to the classical single-worker momentum method when $\tau_t=0$ for all $t$, in which case $\bbm^{\text{asy}}_t=\bbm_t^{\text{syn}}$.

Despite its simplicity, this extension in~\cref{eq:naive_asy} introduces a fundamental challenge in asynchronous learning, as the momentum recursion can \textit{amplify the influence of gradients computed on more stale parameters compared to those computed on more recent ones}.

More specifically, while momentum is usually expressed in its recursive form, this representation can obscure how past information contributes to the update.
A useful perspective of~\cref{eq:single-worker-momentum} comes from unrolling the recursion, which makes explicit the contribution of each past gradient.
Expanding the update yields
\al\label{eq:poly-momentum}
\bbm_t^{\text{syn}}=\beta\sum_{\tau=0}^{t-1}(1-\beta)^\tau\bg_{t-\tau}
\eal
This expansion reveals that momentum is an \emph{exponential moving average} over all past gradients.
Thus, the gradient from $\tau$ iterations ago is weighted by exactly $\beta(1-\beta)^\tau$, where $(1-\beta)^\tau$ reflects its “distance” from the current iterate.
In other words, momentum naturally emphasizes recent gradients while retaining a fading echo of older ones.

\paragraph{Our Proposal.}
The explicit form of the momentum recursion in~\cref{eq:poly-momentum} admits a clean and intuitive interpretation.
In the ideal (delay-free) setting, each stochastic gradient contributes to the momentum buffer according to its exact position in an exponentially decaying sequence: a gradient computed $\tau$ iterations in the past is weighted by $\beta(1-\beta)^{\tau}$.
Preserving this ideal ordering suggests that, under delays, each gradient should be incorporated using precisely the weight it would receive in the single-worker trajectory.
Motivated by this observation, we modify the asynchronous momentum update to
\al\label{eq:ordered-momentum}
\bbm_t=\beta(1-\beta)^{\tau_t}\bg_{t-\tau_t}+(1-\beta)\bbm_{t-1}
\eal
A related “ordered weighting” perspective was recently highlighted by \citet{shi2024ordered}, who group iterations into blocks of size $M$ and assign each block a uniform weight in their \emph{virtual momentum} construction that mimics the ideal momentum.
In contrast, we adopt the same underlying principle but enforce it exactly.
Specifically, each delayed gradient $\bg_{t-\tau_t}$ is assigned the exact coefficient $\beta(1-\beta)^{\tau_t}$ it would have in the ideal, delay-free trajectory.
As a result, stale gradients are neither discarded nor treated as fresh; instead, they are consistently integrated into the virtual momentum structure, with any deviation arising solely from missing future gradients that have not yet arrived.

Formally, for iteration $t\geq1$ and index $1\leq k\leq t$, define the \textit{weighted gradient} as
\als
\bg_{t,k}:=\beta(1-\beta)^{t-k}\bg_k
\eals
where, in the first step, we only use the contribution of the first worker, while the other workers replace the first gradient they receive with zero.

Thus, the resulting \textit{delayed momentum} with ordered weights (\cref{eq:ordered-momentum,alg:momentum}) can be represented as
\al\label{eq:moment_real}
\bbm_t=\sum_{k\in[t]/\T(t)}\bg_{t,k}
\eal
Where $\T(t)$ denotes the set of iteration indices whose gradients have not yet arrived by time $t$, and therefore $|\T(t)|\leq M-1$.
\cref{eq:moment_real} collects all updates that have been computed on the model parameters up to iteration $t$, ignoring all the gradients for the iterations that have not yet arrived and are therefore still missing.
Consequently, the resulting ordered-momentum buffer incorporates only stochastic gradients that have arrived by iteration $t$.
The weighted ordering preserves the structure of the ideal momentum, with the only deviation being the subtraction of at most $M-1$ pending gradients.
With these definitions, we define the virtual momentum $\hat{\bbm}_t$ at iteration $t$ as
\al\label{eq:moment_ideal}
\hat{\bbm}_t:=\sum_{k\in[t]/\T(t)}\bg_{t,k}+\sum_{k\in\T(t)}\Bar{\bg}_{t,k}
\eal
where $\Bar{\bg}_{t,k}:=\beta(1-\beta)^{t-k}\df{\bx_t}$ represents the expected gradient used to fill in the missing iterations.
We shall \emph{use these gradients only for the sake of analysis}.
Specifically, we shall use these gradients to define a virtual (impractical) version of the momentum, which serves as an analytical tool for comparing with and analyzing the practical momentum used in \cref{alg:momentum}.

Finally, the stochastic error of delayed momentum, $\beps_t:=\bbm_t-\df{\bx_t}$, can be decomposed as
\als
\beps_t=\hat{\beps}_t+\bb_t
\eals
where $\hat{\beps}_t=\hat{\bbm}_t-\df{\bx_t}$ is the error of the virtual momentum and $\bb_t$ is the difference induced by delay.
This difference has the explicit form
\al\label{eq:moment_delay}
\bb_t&:=\bbm_t-\hat{\bbm}_t=-\sum_{k\in\T(t)}\Bar{\bg}_{t,k}
\eal
which scales at most linearly with the number of workers, i.e., $\norm{\bb_t}\leq\oo{M}$.
This leads to a stochastic error term for delayed momentum, $\beps_t$, whose \textbf{bound is independent of per-sample delays}.
Instead, it depends only on (i) a factor in the number of workers $M$, rather than the delay itself, and (ii) the same stochastic error present in the ideal (delay-free) case.
The core intuition is that our asynchronous momentum aims to be \emph{as close as possible to ideal, delay-free momentum}; consequently, the resulting bias is restricted to a transient $\oo{M}$ term, accounting only for the future missing updates, rather than scaling with the magnitude of the delays.

\begin{algorithm}[t]
\caption{Asynchronous Momentum with Ordered Weights}
\label{alg:momentum}
\begin{algorithmic}
\STATE \textbf{Input:} \#Iterations $T$, initialization $\bx_1\in\real^d$, learning rate $\eta>0$, momentum parameter $\beta\in(0,1)$.
\STATE Set $\bbm_0=\mathbf{0}^d$.
\FOR{$t=1$ to $T$}
\IF{$t-\tau_t=1 \quad\&\quad t>1$}
\STATE Use $\bg_{t-\tau_t}=0$
\ELSE
\STATE Get $\bg_{t-\tau_t}:=\df{\bx_{t-\tau_t};\bz_{t-\tau_t}}$ from worker.
\ENDIF
\STATE Update: $\bbm_t=\beta(1-\beta)^{\tau_t}\bg_{t-\tau_t}+(1-\beta)\bbm_{t-1}$
\STATE Update: $\bx_{t+1}=\bx_t-\eta\bbm_t$
\STATE Send $\bx_{t+1}$ back to the worker.
\ENDFOR
\STATE \textbf{Output:} $\Bar{\bx}_T$ (chosen uniformly at random from the set $\{\bx_1,\ldots,\bx_T\}$).
\end{algorithmic}
\end{algorithm}

\begin{theorem}[Nonconvex Smooth Objectives]\label{thm:momentum}
Let $f$ satisfy the assumptions in~\cref{sec:assum1}.
Consider the iterates $\{\bx_t\}_{t=1}^T$ generated by~\cref{alg:momentum} with 
$\beta=\mini{\frac{1}{16(M-1)},\frac{\sqrt{5L\Delta}}{\sigma\sqrt{T}}}$, and $\eta=\mini{\frac{1}{32\sqrt{2}L(M-1)},\frac{\sqrt{5\Delta}}{2\sigma\sqrt{2LT}}}$.
Then
\als
\frac{1}{T}\sum_{t=1}^T\Exp{\normsq{\df{\bx_t}}}\leq\oo{\frac{L\Delta M}{T}+\frac{\sigma\sqrt{L\Delta}}{\sqrt{T}}}
\eals
\end{theorem}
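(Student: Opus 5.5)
We follow the standard analysis of SGD with momentum for smooth non-convex objectives, in which the momentum error $\beps_t=\bbm_t-\df{\bx_t}$ is controlled recursively. Here $\beps_t$ is split as $\beps_t=\hat{\beps}_t+\bb_t$, as in \cref{eq:moment_ideal,eq:moment_delay}.

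\textbf{Step 1: descent lemma.} Since $\bx_{t+1}=\bx_t-\eta\bbm_t$, $L$-smoothness and the identity $-\dotprod{a}{b}=\tfrac12\normsq{a-b}-\tfrac12\normsq{a}-\tfrac12\normsq{b}$ give, for $\eta\le 1/(2L)$,
\als
f(\bx_{t+1})\le f(\bx_t)-\tfrac{\eta}{2}\normsq{\df{\bx_t}}-\tfrac{\eta}{4}\normsq{\bbm_t}+\tfrac{\eta}{2}\normsq{\beps_t}.
\eals
Summing over $t$ and telescoping,
\als
\tfrac{1}{T}\sum_t\Exp{\normsq{\df{\bx_t}}}\le \tfrac{2\Delta}{\eta T}+\tfrac1T\sum_t\Exp{\normsq{\beps_t}}-\tfrac{1}{2T}\sum_t\Exp{\normsq{\bbm_t}}.
\eals
The negative $\normsq{\bbm_t}$ term is kept, because it will absorb drift terms later.

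\textbf{Step 2: the virtual error $\hat{\beps}_t$.} Unrolling \cref{eq:moment_ideal} and using the exact ordered weights, $\hat{\bbm}_t$ is exactly the ideal exponential average, except that the missing entries are filled with $\df{\bx_t}$. Hence
\als
\hat{\beps}_t=\sum_{k\notin\T(t)}\beta(1-\beta)^{t-k}\big(\bg_k-\df{\bx_k}\big)+\sum_{k\notin\T(t)}\beta(1-\beta)^{t-k}\big(\df{\bx_k}-\df{\bx_t}\big)-(1-\beta)^t\df{\bx_t}.
\eals
The last term is the initialization bias; the first-step convention makes it controllable by $\Delta$ after summation.
\begin{itemize}
\item \emph{Noise term.} The summands are martingale differences with respect to the sample index $k$. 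The subtle point is that the delays depend on the data, so the set $\T(t)$ is random. I would handle this by bounding the sum over all $k\le t$ and then subtracting the at most $M-1$ missing terms, whose weights are each at most $\beta$. This gives variance at most $\beta\sigma^2+\oo{M\beta^2\sigma^2}$. The condition $\beta\le 1/(16(M-1))$ makes the second term $\oo{\beta\sigma^2}$, so no independence assumption is needed.
\item \emph{Drift term.} Write $\norm{\df{\bx_k}-\df{\bx_t}}\le L\eta\sum_{s=k}^{t-1}\norm{\bbm_s}$. Jensen's inequality with the geometric weights gives a bound of order $(L\eta/\beta)^2\sum_s(1-\beta)^{t-s}\normsq{\bbm_s}$. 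After summing over $t$ this is at most $\oo{L^2\eta^2/\beta^2}\sum_s\normsq{\bbm_s}$.
\end{itemize}

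\textbf{Step 3: the delay bias $\bb_t$, the main obstacle.} By \cref{eq:moment_delay}, $\norm{\bb_t}\le\beta\,|\T(t)|\,\norm{\df{\bx_t}}\le\beta(M-1)\norm{\df{\bx_t}}$. Therefore
\als
\normsq{\bb_t}\le\beta^2(M-1)^2\normsq{\df{\bx_t}}\le\tfrac{1}{256}\normsq{\df{\bx_t}},
\eals
and this is absorbed into the left-hand side of Step 1.

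The danger is that this absorption must not cost a factor of $M$ in the variance term, and the cross terms between $\hat{\beps}_t$ and $\bb_t$ must not reintroduce delay dependence. I expect the delicate work to be exactly here, in justifying the martingale bound with a random, data-dependent $\T(t)$. My first attempt is to bound sums over the complete index set and to treat the missing indices deterministically through the $M-1$ cardinality bound, as described in Step 2.

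\textbf{Step 4: combine.} Collecting terms,
\als
\tfrac{1}{T}\sum_t\Exp{\normsq{\df{\bx_t}}}\lesssim\tfrac{\Delta}{\eta T}+\beta\sigma^2+\tfrac{\Delta}{\beta T}+\Big(\tfrac{L^2\eta^2}{\beta^2}-\tfrac12\Big)\tfrac1T\sum_t\Exp{\normsq{\bbm_t}}.
\eals
The $\Delta/(\beta T)$ term comes from the initialization bias. The given choices satisfy $\eta\approx\beta/(2\sqrt2 L)$, so the coefficient of $\sum_t\normsq{\bbm_t}$ is nonpositive and that term can be dropped.

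It then remains to plug in $\beta$. When $\beta=\sqrt{5L\Delta}/(\sigma\sqrt T)$, both $\beta\sigma^2$ and $\Delta/(\eta T)\sim L\Delta/(\beta T)$ are of order $\sigma\sqrt{L\Delta/T}$. When $\beta=1/(16(M-1))$, the same terms give $L\Delta M/T$. Together these yield the claimed rate.
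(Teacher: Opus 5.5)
\textbf{There is a genuine gap, and it sits in the noise bound of Step~2, exactly where data dependence enters.} Write $\xi_k:=\bg_k-\df{\bx_k}$ and $w_k:=\beta(1-\beta)^{t-k}$. The full sum $\sum_{k\le t}w_k\xi_k$ is a genuine martingale sum, so $\Exp{\normsq{\sum_{k\le t}w_k\xi_k}}\le\beta\sigma^2$. You additionally need $\Exp{\normsq{\sum_{k\in\T(t)}w_k\xi_k}}=\oo{\beta\sigma^2}$, and neither ingredient you invoke supports this.

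First, getting $\oo{M\beta^2\sigma^2}$ instead of the Cauchy--Schwarz count $(M-1)^2\beta^2\sigma^2$ already assumes the pending $\xi_k$ are mutually orthogonal. That is the very martingale property over a random index set you set out to avoid. Without it, even under unbiasedness, $(M-1)^2\beta^2\sigma^2\le(M-1)\beta\sigma^2/16$, which is not $\oo{\beta\sigma^2}$.

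Second, and more seriously, whether $k\in\T(t)$ holds depends on $\bz_k$. The pending samples are therefore a biased selection, and $\mathbb{E}[\normsq{\xi_k}\mid k\in\T(t)]$ need not be at most $\sigma^2$. The only estimate valid without an independence assumption is $\mathbb{E}\sum_{k\in\T(t)}w_k^2\normsq{\xi_k}\le\sum_{k\le t}w_k^2\sigma^2\le\beta\sigma^2$. Cauchy--Schwarz over the at most $M-1$ pending indices then gives only $\oo{M\beta\sigma^2}$. With $\beta=\sqrt{5L\Delta}/(\sigma\sqrt{T})$, this puts a factor $M$ on the leading $\sigma\sqrt{L\Delta/T}$ term.

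\textbf{This is not slack in your estimate.} Take $f(x)=\frac{L}{2}x^2$ on $\real$, with noise $\xi=\sigma/\sqrt{p}$ with probability $p$ and $\xi=-\sigma\sqrt{p}/(1-p)$ otherwise. Let exactly the rare samples be slow, tuned so that $\Theta(M)$ of them are in flight at any time, each for at most $c/\beta$ iterations; this is consistent with $p\approx M\beta$. Then $\ExpB{\sum_{k\in\T(t)}w_k\xi_k}=\Theta(\sigma\sqrt{M\beta})$. Since the full sum has mean zero, $\bbm_t$ carries a persistent bias of that size. The iterates therefore settle around a point where $\normsq{\df{x}}=\Theta(M\beta\sigma^2)$.

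\textbf{The paper's own proof does not supply the missing idea.} In Lemma~\ref{lem:error-ideal-momentum} it asserts $\ExpB{\hat{\beps}_t-\bar{\beps}_t}=0$. It then expands $\Exp{\normsq{\sum_{k\in[t]/\T(t)}(\bg_{t,k}-\bar{\bg}_{t,k})}}$ into a sum of squares because ``the sum is a martingale.'' For a data-dependent $\T(t)$ this is the same unjustified step. Closing the gap therefore needs an extra hypothesis, not a sharper inequality. One option is that $\xi_k$ is conditionally mean-zero with second moment at most $\sigma^2$ given the delay pattern.

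\textbf{The rest of your outline is sound up to constants.} This covers the combined descent inequality, the direct bound $\norm{\bb_t}\le\beta(M-1)\norm{\df{\bx_t}}$, and the drift bound. The per-$t$ form of the drift bound should read $\frac{L^2\eta^2}{\beta}\sum_s(1-\beta)^{t-s}\normsq{\bbm_s}$. With $\eta=\beta/(\sqrt{8}L)$ fixed, you would still need to track the constants in your three-way split of $\hat{\beps}_t$ to make the coefficient of $\sum_t\normsq{\bbm_t}$ nonpositive.
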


\vspace{5pt}
\begin{remark}\textbf{Optimality under Data-Dependent Delays.}
\label{remark:non-convex}
When data independence is not assumed, existing analyses typically impose an additional bounded-gradient assumption, i.e., $\norm{\df{\bx}}\leq G$ for some $G>0$ \citep{mishchenko2022asynchronous,koloskova2022sharper,shi2024ordered}.
Under this assumption, the resulting convergence bounds (see~\cref{eq:non-convex-G}) incur an extra error term of order $\oo{\left(\frac{GL\Delta M}{T}\right)^{2/3}}$.
To exploit parallelization across multiple workers, i.e., to ensure that this term does not dominate the stochastic error $\oo{\frac{1}{\sqrt{T}}}$, it is necessary that $\left(\frac{M}{T}\right)^{2/3}\leq\oo{\frac{1}{\sqrt{T}}}$, which implies that these results permit parallelization only up to $M\leq\oo{T^{1/4}}$ workers without degrading the convergence rate.
In contrast, our analysis avoids this additional slowdown term and does not rely on bounded-gradient assumptions.
Consequently, our method supports parallelization up to the natural limit $M=\oo{\sqrt{T}}$ while preserving the optimal convergence rate, yielding sharper guarantees under weaker assumptions.
\end{remark}

\vspace{5pt}
\begin{remark}\textbf{Generality to Data-Dependent Delays.}
\label{remark:generality}
Our guarantees apply to general smooth nonconvex objectives without assuming independence between the delays and the data.
In contrast, existing works that obtain comparable convergence guarantees rely on a delay–data independence assumption \citep{mishchenko2022asynchronous,koloskova2022sharper,cohen2021asynchronous,aviv2021asynchronous,maranjyan2025ringmaster,attia2024faster,feyzmahdavian2023asynchronous,wu2022delay,shi2024ordered}.
Such assumptions can be restrictive in practice, as delays are often data-dependent, with more complex or informative samples incurring longer computation times.
By avoiding this assumption, our analysis aligns with standard assumptions for smooth objectives in state-of-the-art SGD theory \citep{dekel2012optimal} and more faithfully captures the asynchronous dynamics in reality.
\end{remark}

\vspace{5pt}
\begin{remark}\textbf{Practical Learning Rate and Hyperparameter Tuning.}
\label{remark:practical}
Our guarantees hold with a \emph{fixed learning rate}, which can be optimized via standard offline procedures such as a single grid search.
In contrast, existing delay-adaptive methods \citep{mishchenko2022asynchronous, koloskova2022sharper, shi2024ordered} require per-iteration step sizes of the form $\eta_t=\mini{\ot{\frac{1}{L\tau_t}},\ot{\frac{1}{LM}},\ot{\sqrt{\frac{\Delta}{L\sigma^2T}}}}$.
Because this schedule depends on unobservable, problem-specific quantities, such as the smoothness $L$ and the optimality gap $\Delta$, it must be evaluated at every iteration rather than tuned globally.
Our approach, on the other hand, avoids this overhead by allowing the learning rate to be tuned once.
\end{remark}

\section{The Convex Case via Double Momentum}

We now turn to the convex setting and provide the first data-dependent guarantees for smooth convex objectives.
We further strengthen these results by augmenting our framework with a double-momentum mechanism to enhance robustness to the choice of learning rate.
More specifically, we analyze $\mu^2$-SGD~\citep{dahanstochastic}, which is known for its stability under a broad range of step sizes.
The $\mu^2$-SGD approach augments standard momentum with two complementary components: (i) a variance-reduction correction term STORM \citep{cutkosky2019momentum} that mitigates stochastic gradient noise, and (ii) a parameter-level momentum mechanism, also known as Anytime-SGD \citep{cutkosky2019anytime}.

\subsection{Assumptions}\label{sec:assum2}
We establish our results under the following standard assumptions of stochastic convex optimization:

\textbf{Convexity Under Bounded Diameter:}
The function $f:\K\to\reals$ is convex, where $\K\subseteq\real^d$ is a compact convex set, and there exists $D>0$ such that $\forall\bx,\by\in\K$: 
$\norm{\bx-\by}\leq D$

The bounded diameter assumption means there is a minimum, and we will define $\bx^*:=\underset{\bx\in\K}{\arg\min}\f{\bx}$, $f^*:=\f{\bx^*}$.

\textbf{Bounded Variance}:
There exists $\sigma>0$ such that $\forall\bx\in\K$: $\Exp{\normsq{\df{\bx;\bz}-\df{\bx}}}\leq\sigma^2$.

\textbf{Smoothness}:
There exist $L>0$ such that $\forall\bx,\by\in\K$: $\norm{\df{\bx}-\df{\by}}\leq L\norm{\bx-\by}$.

\textbf{Bounded Smoothness Variance} \citep{dahanstochastic}:
There exists $\sigmal>0$ such that $\forall\bx,\by\in\K$: $\Exp{\normsq{\left(\df{\bx;\bz}-\df{\bx}\right)-\left(\df{\by;\bz}-\df{\by}\right)}}\leq\sigmal^2\normsq{\bx-\by}$.

\textbf{Notations.} For any sequence $\{a_n\}_n$, we use the shorthand $a_{i:j}:=\sum_{n=i}^ja_n$, and for every $\bx\in\mathbb{R}^d$, we denote the orthogonal projection of $\bx$ onto a set $\K$ by $\proj{\bx}:=\arg\min_{\by\in\K}\normsq{\by-\bx}$.

\subsection{STOchastic Recursive Momentum (STORM)}
One of the momentum components of $\mu^2$-SGD is STORM \citep{cutkosky2019momentum}, which extends classical momentum \citep{polyak1964some} by augmenting the recursion with a sample-reuse correction term:
\als
\bd_t^\text{syn}=\underbrace{(1-\beta)\bd_{t-1}^\text{syn}+\beta\bg_t}_{\textnormal{classical momentum}}+\underbrace{(1-\beta)\left(\bg_t-\Tilde{\bg}_{t-1}\right)}_{\textnormal{correction term}}
\eals
where $\bg_t:=\df{\bx_t;\bz_t}$ and $\Tilde{\bg}_{t-1}:=\df{\bx_{t-1};\bz_t}$.
Thus, by applying our asynchronous framework to STORM, we derive the \textbf{ordered asynchronous STORM} update:
\al\label{eq:ordered-storm-fixed}
\bd_t=(1-\beta)\bd_{t-1}+(1-\beta)^{\tau_t}(\bg_{t-\tau_t}-(1-\beta)\Tilde{\bg}_{t-\tau_t-1})
\eal 
This recovers the same momentum term as in~\cref{eq:ordered-momentum}, along with the STORM correction term, which is likewise ordered according to the correct time index.

\paragraph{Role of the correction.} The STORM correction term exploits objective smoothness to contract stochastic error across successive iterates proportionally to $\norm{\bx_t-\bx_{t-1}}$ to reduce stochastic error \citep{cutkosky2019momentum}.
While classical momentum stabilizes updates, it introduces a persistent estimator bias, i.e., $\ExpB{{\bbm}^{\text{sync}}_t}\neq\df{\bx_t}$.
In contrast, STORM introduces a variance-reduction mechanism that ensures the estimator remains unbiased in total expectation, i.e., $\ExpB{{\bd}^{\text{sync}}_t}=\df{\bx_t}$, but not conditionally unbiased.
Equivalently, we can express this STORM momentum in the simplified recursive form:
\als
\bd_t^\text{syn}=\bg_t+\underbrace{(1-\beta)\left(\bd_{t-1}^\text{syn}-\Tilde{\bg}_{t-1}\right)}_{\textnormal{unbiased term}}
\eals

\subsection{Anytime-GD}
The Anytime-GD mechanism originally proposed by \citet{cutkosky2019anytime} where the gradient updates the query point directly.
It maintains two sequences: the gradient descent iterates $\{\bw_t\}_t$ and the averaged parameters $\{\bx_t\}_t$, where $\bx_t$ is a weighted average of $\{\bw_\tau\}_{\tau=1}^t$ with positive weights $\{\alpha_\tau\}_{\tau=1}^t$.
The updates are given by
\als
\bw_{t+1}=\proj{\bw_t-\eta\alpha_t\bg_t}, \qquad
\bx_{t+1}=\bx_t+\frac{\alpha_{t+1}}{\alpha_{1:t+1}}\left(\bw_{t+1}-\bx_t\right)
\eals
where $\bg_t:=\df{\bx_t;\bz_t}$ is a stochastic gradient computed at the model parameters at iteration $t>0$.
A key property of Anytime-GD is that by applying momentum at the parameter level, it contracts the distance between successive iterates for $\K$ with diameter $D>0$:
\als
\norm{\bx_{t+1}-\bx_t}\leq\frac{\alpha_{t+1}}{\alpha_{1:t+1}}D
\eals
This contraction provides additional stability, and when combined with STORM, it leads to a significant reduction in stochastic variance and improved robustness, as demonstrated by \citet{dahanstochastic}.

Under asynchronous convex settings, \citet{aviv2021asynchronous} showed that this mechanism adapts the delays by using
\als
\bw_{t+1}=\proj{\bw_t-\eta\alpha_t\bg_{t-\tau_t}}, \qquad
\bx_{t+1}=\bx_t+\frac{\alpha_{t+1}}{\alpha_{1:t+1}}\left(\bw_{t+1}-\bx_t\right)
\eals
with $\alpha_t=t$.
However, their guarantees rely on a data-independence assumption.
In contrast, by replacing the SGD estimator with STORM within our asynchronous momentum framework, we obtain optimal convergence guarantees that remain valid even when the data distribution and delay are dependent.

\subsection{Combining Anytime and STORM}
The combination of Anytime-SGD and STORM proceeds by replacing the stochastic gradient in the Anytime update with the STORM momentum estimator:
\als
\bw_{t+1}=\proj{\bw_t-\eta\alpha_t{\bd}^{\text{sync}}_t}, \qquad
\bx_{t+1}=\bx_t+\frac{\alpha_{t+1}}{\alpha_{1:t+1}}\left(\bw_{t+1}-\bx_t\right)
\eals
We use increasing importance weights $\{\alpha_t\}_{t\ge1}$ and setting $\alpha_0=0$.
For our analysis, we use a time-varying momentum parameter $\beta_t=1-\frac{\alpha_{t-1}}{\alpha_t}$.
With this choice, the \textit{ideal} STORM step can be written compactly as:
\als
\alpha_t\bd_t^{\text{syn}}=\alpha_{t-1}\bd_{t-1}^{\text{syn}}+\alpha_t\bg_t-\alpha_{t-1}\Tilde{\bg}_{t-1}
\eals
We can unroll this equality and get:
\als
\bd_t^{\text{syn}}=\sum_{k=1}^t\left(\frac{\alpha_k}{\alpha_t}\bg_k-\frac{\alpha_{k-1}}{\alpha_t}\Tilde{\bg}_{k-1}\right)
\eals
Note that $\frac{\alpha_k}{\alpha_t}=\prod_{m=k+1}^t(1-\beta_m)$, which makes it analogous to \cref{eq:poly-momentum}.
This formulation naturally accommodates non-constant momentum parameters $\beta_t$.

For the asynchronous ordered method, we introduce the momentum increment:
\als
\bs_t:=\alpha_t\bg_t-\alpha_{t-1}\tilde{\bg}_{t-1}
\eals
With this definition, the \textit{ordered} STORM momentum with a varying momentum parameter admits the compact form:
\al\label{eq:mu2}
\alpha_t\bd_t=\alpha_{t-1}\bd_{t-1}+\bs_{t-\tau_t}
\eal

In general, $\bd^{\text{sync}}_t$ is an unbiased estimator of $\df{\bx_t}$ \citep{cutkosky2019momentum}.
In the asynchronous setting, $\bs_t$ is replaced by a delayed increment $\bs_{t-\tau_t}$ (\cref{eq:mu2}), which introduces bias into the estimator.
As discussed earlier, the magnitude of the bias of $\alpha_t\bd_t$ scales as $\oo{M}$ (see \cref{sec:non-convex}), which means that the bias of $\bd_t$ is $\oo{\frac{M}{t}}$.

\begin{algorithm}[t]
\caption{Asynchronous $\mu^2$-SGD with Ordered Weights}
\label{alg:mu2}
\begin{algorithmic}
\STATE \textbf{Input:} \#Iterations $T$, initialization $\bx_0\in\K$, learning rate $\eta>0$, increasing importance weights $\{\alpha_t\}_{t=1}^T$, $\alpha_0=0$.
\STATE Set $\alpha_0\bd_0=\mathbf{0}^d$ and $\bw_1=\bx_1=\bx_0$.
\FOR{$t=1$ to $T$}
\STATE Get $\bg_{t-\tau_t}:=\df{\bx_{t-\tau_t};\bz_{t-\tau_t}}$ and $\Tilde{\bg}_{t-\tau_t-1}:=\df{\bx_{t-\tau_t-1};\bz_{t-\tau_t}}$ from worker.
\STATE Compute: $\bs_{t-\tau_t}=\alpha_{t-\tau_t}\bg_{t-\tau_t}-\alpha_{t-\tau_t-1}\Tilde{\bg}_{t-\tau_t-1}$
\STATE Update: $\alpha_t\bd_t=\alpha_{t-1}\bd_{t-1}+\bs_{t-\tau_t}$
\STATE Update: $\bw_{t+1}=\proj{\bw_t-\eta\alpha_t\bd_t}$
\STATE Update: $\bx_{t+1}=\bx_t+\frac{\alpha_{t+1}}{\alpha_{1:t+1}}\left(\bw_{t+1}-\bx_t\right)$
\STATE Send $\bx_{t+1}$ back to the worker.
\ENDFOR
\STATE \textbf{Output:} $\bx_T$
\end{algorithmic}
\end{algorithm}

\begin{theorem}[Convex Smooth Objectives]\label{thm:mu2}
Let $f$ satisfy the assumptions in~\cref{sec:assum2}.
Consider $\{\bx_t\}_{t=1}^T$ to be the iterates generated by the update rule according to \cref{alg:mu2} with $\alpha_t=t$, where $\eta$ satisfies:
\als
4L\leq\frac{1}{T\eta}\leq\oo{\left(\frac{\sigma}{D}+\sigmal\right)\sqrt{T}+LM}
\eals
Then we get an excess loss bound of:
\als
\Exp{\f{\bx_t}}-f^*\leq\oo{\frac{D\sigma+D^2\sigmal}{\sqrt{T}}+\frac{D^2LM}{T}}
\eals
\end{theorem}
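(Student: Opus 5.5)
We follow the standard $\mu^2$-SGD analysis of \citet{dahanstochastic}, which rests on the Anytime-to-regret reduction of \citet{cutkosky2019anytime}, and handle asynchrony through the virtual-momentum decomposition of \cref{sec:non-convex}. The Anytime lemma states that for any sequence $\{\bd_t\}$ used in $\bw_{t+1}=\proj{\bw_t-\eta\alpha_t\bd_t}$ with $\bx_t$ the $\alpha$-weighted average of the $\bw$'s, convexity gives
\als
\alpha_{1:T}\left(\f{\bx_T}-f^*\right)\leq\sum_{t=1}^T\alpha_t\dotprod{\df{\bx_t}}{\bw_t-\bx^*}
=\sum_{t=1}^T\alpha_t\dotprod{\bd_t}{\bw_t-\bx^*}-\sum_{t=1}^T\alpha_t\dotprod{\beps_t}{\bw_t-\bx^*},
\eals
where $\beps_t:=\bd_t-\df{\bx_t}$. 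Projected OGD bounds the first sum by $\frac{D^2}{2\eta}+\frac{\eta}{2}\sum_t\alpha_t^2\normsq{\bd_t}$. We keep the negative term $-\frac{1}{2\eta}\sum\normsq{\bw_{t+1}-\bw_t}$ only if needed. The second sum is handled by Cauchy--Schwarz plus Young: it is at most $\frac{D^2}{2\eta}$-type terms plus $\eta\sum_t\alpha_t^2\normsq{\beps_t}$. We then write $\normsq{\bd_t}\le 2\normsq{\df{\bx_t}}+2\normsq{\beps_t}$ and use smoothness, $\normsq{\df{\bx_t}}\le 2L(\f{\bx_t}-f^*)$ (valid if $\bx^*$ is interior, or otherwise replaced by the usual smooth-constrained trick comparing to $\df{\bx^*}$). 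The resulting $\eta L\sum\alpha_t^2(\f{\bx_t}-f^*)$ term is absorbed into the left-hand side using $\eta\le\frac{1}{4LT}$ and a standard ``sum over $t$ of the anytime bound'' argument. Since $\alpha_t=t$ gives $\alpha_{1:T}\asymp T^2$, it remains to show $\sum_t \alpha_t^2\Exp{\normsq{\beps_t}}\lesssim (\sigma^2+D^2\sigmal^2)T+ \ldots$ together with a bias contribution scaling like $M$.

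For the error, we decompose $\alpha_t\beps_t=\alpha_t\hat{\beps}_t+\alpha_t\bb_t$ as in \cref{eq:moment_ideal,eq:moment_delay}. Here $\alpha_t\hat\bd_t=\sum_{k\in[t]\setminus\T(t)}\bs_k+\sum_{k\in\T(t)}\bar\bs_{t,k}$, where $\bar\bs_{t,k}$ is the expected increment. Taking the ideal STORM quantity $\alpha_t\bd^{\text{syn}}_t-\alpha_t\df{\bx_t}=\sum_{k\le t}\big(\bs_k-\alpha_k\df{\bx_k}+\alpha_{k-1}\df{\bx_{k-1}}\big)$, the summands form a martingale-difference sequence whose increments satisfy
\als
\Exp{\normsq{\cdot}}\le 2\sigma^2+2\alpha_{k-1}^2\sigmal^2\normsq{\bx_k-\bx_{k-1}}\le 2\sigma^2+8\sigmal^2D^2
\eals
by the Anytime contraction $\norm{\bx_k-\bx_{k-1}}\le\frac{\alpha_k}{\alpha_{1:k}}D\le \frac{2D}{k}$. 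The martingale property holds regardless of delays, because each $\bz_k$ is fresh at the time $\bx_k$ and $\bx_{k-1}$ are fixed, so data-dependent delays do not break it. This gives $\Exp{\normsq{\alpha_t\hat\beps_t}}\lesssim t(\sigma^2+D^2\sigmal^2)$, up to the missing terms. Those missing terms are exactly the $\le M-1$ pending increments, which are recorded in $\bb_t$: $\alpha_t\bb_t=-\sum_{k\in\T(t)}(\bs_k-\bar\bs_k)$ (or the bias version). We split this into a noise part of size $\oo{M(\sigma^2+D^2\sigmal^2)}$ and a deterministic part $\sum_{k\in\T(t)}\big(\alpha_k\df{\bx_k}-\alpha_{k-1}\df{\bx_{k-1}}\big)$. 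Since the indices in $\T(t)$ lie within the last $\oo{M}$-ish window only in count, not in position, we instead bound each telescoping increment by $\norm{\df{\bx_k}}+\alpha_{k-1}L\norm{\bx_k-\bx_{k-1}}\le\norm{\df{\bx_k}}+2LD$. This yields $\norm{\alpha_t\bb_t}\lesssim M\,LD+\sum_{k\in\T(t)}\norm{\df{\bx_k}}$.

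The main obstacle is this delay bias, in particular the gradient-norm piece $\sum_{k\in\T(t)}\norm{\df{\bx_k}}$, since no bounded-gradient assumption is available. Our plan is as follows. In the inner product $\alpha_t\dotprod{\bb_t}{\bw_t-\bx^*}$ we use $\norm{\bw_t-\bx^*}\le D$, so the deterministic part contributes $\sum_t D\cdot(MLD+\sum_{k\in\T(t)}\norm{\df{\bx_k}})$. Each index $k$ belongs to $\T(t)$ only while pending, and it appears at most once per $t$, but it can be pending for many $t$. To avoid paying for long delays, we instead use the shifted comparison: the missing gradients are those of the virtual sequence, and we bound $\norm{\df{\bx_k}}\le\norm{\df{\bx_t}}+L\norm{\bx_t-\bx_k}$. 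We then absorb $\norm{\df{\bx_t}}$ via $\normsq{\df{\bx_t}}\le 2L(\f{\bx_t}-f^*)$ and AM--GM into the left-hand side, while $\norm{\bx_t-\bx_k}\le D$ is bounded by the diameter. This should give a bias contribution of order $\sum_t MLD^2=\oo{MLD^2T}$, which after dividing by $\alpha_{1:T}\asymp T^2$ yields the $\frac{D^2LM}{T}$ term. Finally, we collect terms: $\frac{D^2}{\eta T^2}+\frac{\eta}{T^2}\sum_t t(\sigma^2+D^2\sigmal^2)\cdot t\ldots$. Choosing $\frac{1}{\eta T}$ in the stated range balances $\frac{D^2}{\eta T^2}$ against $\eta T(\sigma^2+D^2\sigmal^2)$, giving $\frac{D\sigma+D^2\sigmal}{\sqrt T}$, with the upper range limit $LM$ giving $\frac{D^2LM}{T}$. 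The bound $\frac{1}{\eta T}\ge 4L$ is what is needed for the absorption step.
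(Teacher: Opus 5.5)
There is a genuine gap in how you control the stochastic error, and it concerns exactly the $\eta$-range in the statement. You use the standard projected-OGD bound $\frac{D^2}{2\eta}+\frac{\eta}{2}\sum_t\alpha_t^2\normsq{\bd_t}$ together with a Young step whose weight is $\eta$, so the noise ends up inside $\eta$-weighted terms. (The bookkeeping is also off. Young gives $\frac{TD^2}{2\eta}$ if applied termwise, or $\frac{D^2}{2\eta}+\frac{\eta T}{2}\sum_t\alpha_t^2\normsq{\beps_t}$ if applied to $D\sum_t\alpha_t\norm{\beps_t}$ as a whole.)

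Your final expression $\frac{D^2}{\eta T^2}+\eta T(\sigma^2+D^2\sigmal^2)$ has the right order only near one value of $\eta$. At the admissible endpoint $\frac{1}{T\eta}=4L$ it equals $\frac{4LD^2}{T}+\frac{\sigma^2+D^2\sigmal^2}{4L}$, which does not vanish, whereas the theorem claims the rate for every $\eta$ in the range. Decoupling the Young parameter from $\eta$ does not fully fix this:

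\begin{itemize}
\item The OGD term $\eta\sum_t\alpha_t^2\normsq{\beps_t}$, divided by $\alpha_{1:T}$, is of order $\eta(\sigma+D\sigmal)^2$. At that endpoint this is of order $\frac{\sigma^2}{LT}$, which exceeds $\frac{D\sigma}{\sqrt T}$ when $\sigma\gg LD\sqrt T$.
\item The piece $\eta\sum_t\alpha_t^2\normsq{\df{\bx^*}}$ hidden in $\eta\sum_t\alpha_t^2\normsq{\df{\bx_t}}$ is of order $\normsq{\df{\bx^*}}/L$ for a boundary minimizer. The summation-by-parts trick for $\df{\bx^*}$ is no longer available once $\bw_\tau-\bw_{\tau+1}$ has been traded for $\eta\normsq{\bq_\tau}$.
\end{itemize}

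The missing idea is the shifted form of OGD in Lemma~\ref{lem:OGD+_Guarantees}. Pair $\bq_\tau$ with $\bw_{\tau+1}$, which gives $\frac{D^2}{2\eta}-\frac{1}{2\eta}\sum_\tau\normsq{\bw_\tau-\bw_{\tau+1}}$ with no $\eta\normsq{\bq_\tau}$ term. Split $\bq_\tau=\alpha_\tau\df{\bx_\tau}+\alpha_\tau\beps_\tau$ \emph{before} using the negative term, and spend it only on the noiseless $\alpha_\tau\dotprod{\df{\bx_\tau}-\df{\bx^*}}{\bw_\tau-\bw_{\tau+1}}$, via Lemma~\ref{lem:smooth-conv} and absorption with $\eta\le\frac{1}{4LT}$. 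Sum the $\df{\bx^*}$ part by parts to get $\alpha_tD\norm{\df{\bx^*}}$. Bound the noise only by $D\sum_t\sqrt{\Exp{\normsq{\alpha_t\beps_t}}}$, which is $\eta$-free. Then the upper end of the $\eta$-range only controls $\frac{2D^2}{\eta T^2}$, and the lower end only serves the absorption.

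Second, the obstacle you single out in the delay bias is not a real one, and your workaround is flawed. Every $\bx_k$ lies in $\K$, so $\norm{\df{\bx_k}}\le\norm{\df{\bx^*}}+LD$ directly. Your absorption instead relies on $\normsq{\df{\bx_t}}\le2L(\f{\bx_t}-f^*)$, which fails when $\bx^*$ is on the boundary.

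More importantly, your representation $\alpha_t\hat{\bd}_t=\sum_{k\in[t]\setminus\T(t)}\bs_k+\cdots$ does not describe \cref{alg:mu2}. Each worker's first delivery is an increment computed at $\bx_1$, and nothing is discarded, so after $t$ arrivals $\bq_t$ contains $|\T(t)|+1$ index-$1$ increments. The paper pairs these surplus copies with the pending indices, so the bias is $\sum_{k\in\T(t)}(\Bar{\bs}_1-\Bar{\bs}_k)$. Each term is at most $L\norm{\bx_1-\bx_k}+\alpha_{k-1}L\norm{\bx_k-\bx_{k-1}}\le3LD$, by smoothness, the diameter bound, and the Anytime contraction. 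This gives $\Exp{\normsq{\alpha_t\beps_t}}\le(\sigma+2D\sigmal)^2t+(3DL(M-1))^2$, and hence the $\frac{D^2LM}{T}$ term, without any gradient norms.
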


\vspace{5pt}
\begin{remark}\textbf{Optimality under data-dependent delays.}
 We provide the first convergence guarantees for smooth convex objectives under data-dependent delays.
 Previously, results in this regime were restricted to non-smooth with $G$-bounded gradient objectives and suffered a $\oo{\sqrt{M}}$ slowdown in the dominant convergence term $\oo{\frac{DG\sqrt{M}}{\sqrt{T}}}$ \citep{mishchenko2022asynchronous}.
 In contrast, our analysis shows that for smooth objectives, the dependence on $M$ is confined to the fast-vanishing term $\oo{\frac{D^2LM}{T}}$.
 This aligns with the best-known bounds for data-independent delays \citep{aviv2021asynchronous, cohen2021asynchronous}, demonstrating that our framework effectively mitigates the impact of data-dependent delays.
\end{remark}

\vspace{5pt}
\begin{remark}\textbf{Generality to Data-Dependent Delays.}
As discussed in~\cref{thm:momentum}, our analysis applies to general smooth objectives without assuming independence between the data and the delays, thereby more accurately capturing the dynamics of asynchronous settings.
\end{remark}

\vspace{5pt}
\begin{remark}\textbf{Practical Learning Rate and Hyperparameter Tuning.}
\label{remark:lr-convex}
\cref{thm:mu2} shows that $\mu^2$-SGD allows a broad range of learning rates.
Specifically, the convergence guarantee holds for any $\eta\in[\eta_{\min},\eta_{\max}]$, as long as $\frac{\eta_{\max}}{\eta_{\min}}=\ot{\sqrt{T}+M}$.
Therefore, even under asynchronous updates with data-dependent delays, $\mu^2$-SGD remains stable over this wide range of choices.
This flexibility reduces the need for careful hyperparameter tuning and demonstrates robustness to highly variable asynchrony.
\end{remark}

\vspace{5pt}
\begin{remark}
\label{remark:time-complexity}
The iteration-complexity bounds in \cref{thm:momentum,thm:mu2} can be translated into time-complexity bounds using the framework of \citet{tyurin2023optimal}. Existing optimal-time frameworks are typically tailored to \emph{worker-dependent} delays, where the delay is determined by the worker. In contrast, our framework handles general \emph{data-dependent} delays and is the first to achieve optimal iteration complexity in this setting. As a result, it yields the first best-known time-complexity bounds under data-dependent delays. Furthermore, when delays are worker-dependent, our method can be combined with a simple subset-selection rule that chooses the fastest workers (see e.g., \citet{maranjyan2025ringmaster}), thereby recovering the optimal time-complexity bounds for worker-dependent delays. Thus, one can use our framework to combine the advantages of both settings: establishing optimal time complexity for general data-dependent delays while retaining optimality for worker-dependent delays.
\end{remark}

\section{Experiments}
\label{sec:exp}
We evaluate our framework on MNIST \citep{lecun2010mnist} and CIFAR-10 \citep{krizhevsky2014cifar} under data-dependent delays, where delays are introduced via a controlled mechanism in which samples from selected classes (such as class 9 in MNIST) incur systematically larger average delays while approximately preserving the overall class distribution (\cref{fig:data-dependent-delay-model,sec:delay-model}).
For MNIST, we train a two-layer convolutional network and induce a non-Lipschitz objective with unbounded gradients by cubically transforming the model outputs before computing the loss. For CIFAR-10, we fine-tune ResNet-18 and assign higher delays to half of the classes: classes 5--9 are slow, whereas classes 0--4 are fast.
We compare Ordered Momentum and Ordered $\mu^2$-SGD against standard baselines, including Momentum (\cref{eq:naive_asy}, \citet{polyak1964some}), $\mu^2$-SGD \citep{dahanstochastic}, vanilla SGD (\cref{eq:async-sgd}, \citet{koloskova2022sharper}), Delay-Adaptive SGD (\cref{eq:delay-adaptive-sgd}, \citet{mishchenko2022asynchronous}), and Delay-Filtered SGD.
Delay-Filtered SGD discards gradients with delays exceeding an optimized threshold $\tau$, in the spirit of delay filtering-based methods explored in prior work \citep{maranjyan2025ringmaster, cohen2021asynchronous, attia2024faster}.
For the $\mu^2$-SGD methods, we use the constant-parameter formulation of \citet{dahanstochastic}.
The naive $\mu^2$-SGD updates
\al\label{eq:naive-mu2}
\bd_t=&\bg_{t-\tau_t}+(1-\beta)\left(\bd_{t-1}-\tilde{\bg}_{t-\tau_t-1}\right) \non
\bw_{t+1}=&\bw_t-\eta\bd_t, \quad \bx_{t+1}=\gamma\bw_{t+1}+(1-\gamma)\bx_t
\eal
while Ordered $\mu^2$-SGD uses the constant momentum parameters from \cref{eq:naive-mu2} with $\bd_t$ updated as in \cref{eq:ordered-storm-fixed}. All experiments are implemented in PyTorch and averaged over three random seeds; MNIST runs are executed on an Apple M2 chip, and CIFAR-10 runs on an NVIDIA RTX A4000 GPU. Further details are given in Appendix~\ref{app:exp}.

The results in \cref{fig:f1_iterations,fig:cifar10_resnet18_delay} and Appendix~\ref{app:exp} show that ordering improves performance. Ordered Momentum and Ordered $\mu^2$-SGD achieve higher F1 scores, exhibit more stable accuracy, and outperform the other asynchronous SGD baselines on both datasets. Among the unordered methods, $\mu^2$-SGD is already notably stable and performs well even without ordering. Nevertheless, Ordered $\mu^2$-SGD achieves the strongest overall performance, showing that the proposed ordering mechanism provides additional benefits even for momentum-based methods. The learning-rate sensitivity results in \cref{fig:f1_iterations} further show that ordered momentum methods are more robust to learning-rate choices. This is consistent with the learning-rate robustness guarantees established for both the non-convex and convex settings in \cref{remark:practical,remark:lr-convex}. In particular, ordering strengthens the inherent robustness of momentum-based methods, whereas the other asynchronous SGD baselines are more sensitive to the learning rate. Overall, these experiments show that our ordered momentum-based methods are more stable and robust under data-dependent delays.

\begin{figure}[t]
\centering
\includegraphics[width=\linewidth]{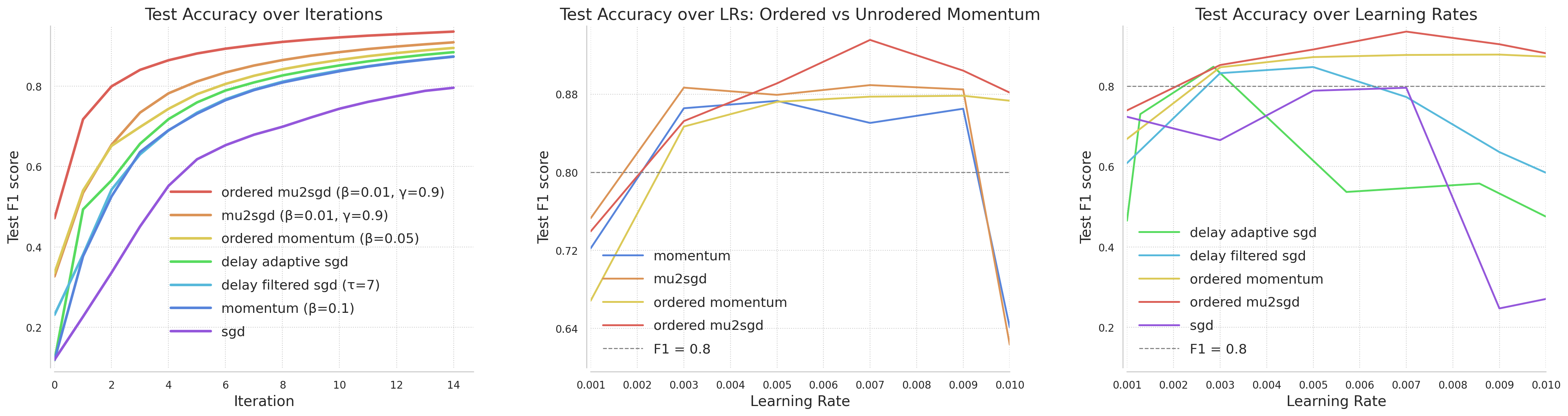}
\caption{MNIST: Test F1 score comparisons across optimization methods. Left: test F1 score over training iterations. Middle: test F1 score as a function of learning rate for momentum-based methods. Right:  test F1 score as a function of learning rate across asynchronous baselines.}
\label{fig:f1_iterations}
\end{figure}

\section{Conclusions and Future Work}
This work introduces a momentum-based framework for asynchronous optimization with data-dependent delays.
We provide the first convergence guarantees in both convex and non-convex settings without requiring a restrictive delay-independence assumption.
By decoupling the convergence analysis from specific delay distributions, our results remain applicable to real-world systems where data and delays are inherently coupled.
For future work, a natural next step is to extend our framework to adaptive optimizers such as Adam, AdamW, and related variants.
Specifically, analyzing how data-dependent delays interact with the second-moment estimation in adaptive methods could yield more robust asynchronous training for large-scale training.

\bibliographystyle{plainnat}
\bibliography{bib_.bib}

\clearpage
\appendix

\section{Non-Convex Analysis}\label{app:momentum}
\paragraph{Virtual Momentum.}
In the spirit of \citet{mishchenko2022asynchronous,shi2024ordered}, we define the \emph{virtual} momentum $\hat{\bbm}_t$ (\cref{eq:moment_ideal}), in which all updates are applied exactly at their designated iterations.

\begin{lemma}
\label{lem:smooth-functions}
let $f:\real^d\to\real$ be a function that satisfies the assumptions in~\cref{sec:assum1}, then for any $\bx\in\real^d$ we have:
\als
\normsq{\df{\bx}}\leq2L\left(f(\bx)-f^*\right)
\eals
\end{lemma}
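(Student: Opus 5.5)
The plan is to use the standard descent lemma for $L$-smooth functions, applied at one well-chosen point, and then compare with the global lower bound $f^*$. From the smoothness assumption in~\cref{sec:assum1}, for any $\bx,\by\in\real^d$ I will first record the quadratic upper bound
\als
\f{\by}\leq\f{\bx}+\dotprod{\df{\bx}}{\by-\bx}+\frac{L}{2}\normsq{\by-\bx}.
\eals
This follows by writing $\f{\by}-\f{\bx}=\int_0^1\dotprod{\df{\bx+s(\by-\bx)}}{\by-\bx}\,ds$. Subtracting $\dotprod{\df{\bx}}{\by-\bx}$, I then bound the remaining integrand by Cauchy--Schwarz together with the Lipschitz property of $\nabla f$, which gives $\int_0^1 Ls\normsq{\by-\bx}\,ds=\frac{L}{2}\normsq{\by-\bx}$.

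Next I will take the gradient step $\by=\bx-\frac{1}{L}\df{\bx}$, which minimizes the right-hand side over $\by$. Substituting gives
\als
\f{\by}\leq\f{\bx}-\frac{1}{L}\normsq{\df{\bx}}+\frac{1}{2L}\normsq{\df{\bx}}=\f{\bx}-\frac{1}{2L}\normsq{\df{\bx}}.
\eals
Since $f^*=\inf_{\bx}\f{\bx}$ is finite by the lower-boundedness assumption, we have $f^*\leq\f{\by}$. Hence $f^*\leq\f{\bx}-\frac{1}{2L}\normsq{\df{\bx}}$. Rearranging yields $\normsq{\df{\bx}}\leq2L\left(\f{\bx}-f^*\right)$, as claimed.

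There is no real obstacle here. The only point needing care is that $\by$ is unconstrained, because the domain is all of $\real^d$, so the step is always admissible. The other point is that we use only $f^*\leq \f{\by}$, and never that the infimum is attained. Note that convexity is not needed.
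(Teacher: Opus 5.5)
Your proof is correct and follows essentially the same route as the paper: apply the $L$-smoothness quadratic upper bound at $\by=\bx-\frac{1}{L}\df{\bx}$, use $f^*\leq\f{\by}$, and rearrange. The only difference is that you also derive the descent inequality from the integral form, which the paper simply invokes.
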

\begin{proof}[Proof of Lemma~\ref{lem:smooth-functions}]
By $L$-smoothness, for all $\bx,\by\in\real^d$:
\als
\f{\by}-\f{\bx}\leq\dotprod{\df{\bx}}{\by-\bx}+\frac{L}{2}\normsq{\by-\bx}=-\frac{1}{2L}\normsq{\df{\bx}}
\eals
Where we get the equality by setting $\by=\bx-\frac{1}{L}\df{\bx}$.
Using $f(\by)\geq f^*$ and a simple rearrangement yields the claim.
\end{proof}
The use of this lemma is to bound $\normsq{\df{\bx_1}}\leq2L\Delta$.

\begin{lemma}\label{lem:dis-momentum}
Let $f:\real^d\to\real$ be a function that satisfies the assumptions in~\cref{sec:assum1}.
Then, for any iteration $t\geq1$ with momentum parameter $\beta\leq\frac{1}{16(M-1)}$, the cumulative stochastic error of the delayed momentum in~\Cref{alg:momentum}, $\beps_t:=\bbm_t-\df{\bx_t}$, evaluated over $T$ iterations, satisfies:
\als
\frac{1}{T}\sum_{t=1}^T\Exp{\normsq{\beps_t}}\leq\frac{2}{T}\sum_{t=1}^T\Exp{\normsq{\hat{\beps}_t}}+\frac{1}{4T}\sum_{t=1}^T\Exp{\normsq{\df{\bx_t}}}
\eals
\end{lemma}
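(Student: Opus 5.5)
We use the decomposition $\beps_t=\hat{\beps}_t+\bb_t$ from \cref{eq:moment_delay}, together with $\normsq{\ba+\bb}\leq2\normsq{\ba}+2\normsq{\bb}$:
\als
\normsq{\beps_t}\leq2\normsq{\hat{\beps}_t}+2\normsq{\bb_t}.
\eals
Averaging over $t$ and taking expectations, the first term is exactly the virtual-momentum contribution in the statement. It therefore suffices to show
\als
\frac{2}{T}\sum_{t=1}^T\Exp{\normsq{\bb_t}}\leq\frac{1}{4T}\sum_{t=1}^T\Exp{\normsq{\df{\bx_t}}},
\eals
and we aim to prove this pointwise in $t$, namely $\normsq{\bb_t}\leq\frac{1}{8}\normsq{\df{\bx_t}}$ almost surely.

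The key observation is that the delay bias is deterministic given the current iterate. Since every $\Bar{\bg}_{t,k}$ is a multiple of the same vector $\df{\bx_t}$,
\als
\bb_t=-\beta\Big(\sum_{k\in\T(t)}(1-\beta)^{t-k}\Big)\df{\bx_t}.
\eals
Each coefficient $(1-\beta)^{t-k}$ is at most $1$, and $|\T(t)|\leq M-1$ because at most $M-1$ other workers can hold pending gradients at time $t$. Hence
\als
\norm{\bb_t}\leq\beta(M-1)\norm{\df{\bx_t}}\leq\tfrac{1}{16}\norm{\df{\bx_t}}
\eals
by the assumption $\beta\leq\frac{1}{16(M-1)}$. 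This gives $2\normsq{\bb_t}\leq\frac{2}{256}\normsq{\df{\bx_t}}$, which is far below the required $\frac14\normsq{\df{\bx_t}}$. Summing over $t$, dividing by $T$, and taking expectations yields the claim. The pointwise bound holds regardless of how the delays correlate with the data, so no independence assumption is needed.

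The main obstacle is bookkeeping, not analysis. First, one must justify $|\T(t)|\leq M-1$ and check that the first-step convention in \cref{alg:momentum} fits the representation \cref{eq:moment_real}. That convention lets only one worker contribute $\bg_1$, while the others' initial gradients are replaced by zero. Second, one must confirm that the virtual momentum $\hat{\bbm}_t$ is defined with the filler $\df{\bx_t}$ evaluated at the current iterate. This is what makes $\bb_t$ collinear with $\df{\bx_t}$.

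If instead the definition were read with fillers at other points, a triangle inequality would still give the same $\beta(M-1)$ factor. In that case one would additionally need smoothness to relate those gradients to $\df{\bx_t}$. With the paper's definition this is unnecessary, and the case $M=1$ is trivial since then $\bb_t=0$.
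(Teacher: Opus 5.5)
Under the literal main-text definition $\Bar{\bg}_{t,k}=\beta(1-\beta)^{t-k}\df{\bx_t}$ your argument is valid, and even gives a pointwise bound. That definition, however, is a typo, and the lemma you prove with it cannot be used by the rest of the analysis. The paper's own proof of this lemma bounds $\normsq{\Bar{\bg}_{t,k}}$ by $\beta^2(1-\beta)^{2(t-k)}\normsq{\df{\bx_k}}$. So the intended filler is $\df{\bx_k}$, the conditional mean of the pending gradient $\bg_k$.

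This reading is forced by \cref{lem:error-ideal-momentum}. That lemma needs $\bg_{t,k}-\Bar{\bg}_{t,k}=\beta(1-\beta)^{t-k}\left(\bg_k-\df{\bx_k}\right)$ to be a martingale difference, and it needs the recursion $\Bar{\bg}_{t,k}=(1-\beta)\Bar{\bg}_{t-1,k}$. Both fail if every filler is $\df{\bx_t}$. Under the intended definition, $\bb_t=-\sum_{k\in\T(t)}\beta(1-\beta)^{t-k}\df{\bx_k}$ involves gradients at stale iterates. The collinearity with $\df{\bx_t}$ that your proof rests on is then unavailable, and no pointwise bound $\normsq{\bb_t}\leq c\,\normsq{\df{\bx_t}}$ holds.

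Your smoothness fallback does not repair this. It gives $\norm{\df{\bx_k}-\df{\bx_t}}\leq L\norm{\bx_t-\bx_k}\leq L\eta\sum_{j=k}^{t-1}\norm{\bbm_j}$, where $t-k$ can be as large as the data-dependent, unbounded delay. That produces terms absent from the right-hand side and reintroduces exactly the delay dependence the method is meant to avoid.

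The missing idea is to amortize over $t$ instead of bounding pointwise, which is why the lemma is stated in averaged form. Apply Cauchy--Schwarz with $|\T(t)|\leq M-1$, enlarge the inner sum to all $k\leq t$, and exchange the order of summation:
\begin{align*}
\sum_{t=1}^T\normsq{\bb_t}&\leq(M-1)\beta^2\sum_{t=1}^T\sum_{k=1}^t(1-\beta)^{2(t-k)}\normsq{\df{\bx_k}}\\
&=(M-1)\beta^2\sum_{k=1}^T\normsq{\df{\bx_k}}\sum_{t=k}^T(1-\beta)^{2(t-k)}\leq\frac{1}{16}\sum_{k=1}^T\normsq{\df{\bx_k}},
\end{align*}
where the last step uses $\sum_{j\geq0}(1-\beta)^{2j}\leq 1/\beta$ and $\beta(M-1)\leq\frac{1}{16}$. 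Each stale gradient is charged to its own index $k$ with total geometric weight at most $1/\beta$, which cancels one factor of $\beta$. Combined with your first step $\normsq{\beps_t}\leq2\normsq{\hat{\beps}_t}+2\normsq{\bb_t}$, this gives the claim with constant $\frac{1}{8}\leq\frac{1}{4}$.
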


\begin{proof}[Proof of Lemma~\ref{lem:dis-momentum}]
We will define the delay $\bb_t:=\bbm_t-\hat{\bbm}_t=\beps_t-\hat{\beps}_t$, and note~\cref{eq:moment_delay}:
\als
\bb_t=-\sum_{k\in\T(t)}\Bar{\bg}_{t,k}
\eals
Next, we use the decomposition $\beps_t=\hat{\beps}_t+\bb_t$ and the inequality $\normsq{\bx+\by}\leq2\normsq{\bx}+2\normsq{\by}$, we obtain $\Exp{\normsq{\beps}}\leq2\Exp{\normsq{\hat{\beps}_t}}+2\Exp{\normsq{\bb_t}}$.
\als
&\normsq{\bb_t}=\normsq{-\sum_{k\in\T(t)}\Bar{\bg}_{t,k}}\leq(M-1)\sum_{k\in\T(t)}\normsq{\Bar{\bg}_{t,k}} \\
\leq&(M-1)\beta^2\sum_{k=1}^t(1-\beta)^{2(t-k)}\normsq{\df{\bx_k}}\leq\frac{\beta}{8}\sum_{k=1}^t(1-\beta)^{2(t-k)}\normsq{\df{\bx_k}}
\eals
Where the first inequality uses Jensen's inequality and the fact that $|\T(t)|\leq M-1$.
In the next inequality we input the definitions of $\Bar{\bg}_{t,k}$ and add the missing members in the sum.
At last, we use the bound $(M-1)\beta\leq\frac{1}{16}$.

Averaging the above over $T$ iterations gives:
\als
\frac{1}{T}\sum_{t=1}^T\normsq{\bb_t}\leq&\frac{\beta}{8T}\sum_{t=1}^T\sum_{k=1}^t(1-\beta)^{2(t-k)}\normsq{\df{\bx_k}} \\
\leq&\frac{\beta}{8T}\sum_{k=1}^T\normsq{\df{\bx_k}}\sum_{t=k}^T(1-\beta)^{2(t-k)}\leq\frac{1}{8T}\sum_{k=1}^T\normsq{\df{\bx_k}}
\eals
Where in the last inequality we bound the geometric sum with $\frac{1}{\beta}$.

Putting it together, we get:
\als
\frac{1}{T}\sum_{t=1}^T\Exp{\normsq{\beps_t}}\leq\frac{2}{T}\sum_{t=1}^T\Exp{\normsq{\hat{\beps}_t}}+\frac{1}{4T}\sum_{t=1}^T\Exp{\normsq{\df{\bx_t}}}
\eals
\end{proof}

\paragraph{Stochastic Variance of the Virtual Momentum.}
Following Lemma~\ref{lem:dis-momentum}, we need to bound the error of the virutal momentum.
\begin{lemma}
\label{lem:error-ideal-momentum}
Let $f:\real^d\to\real$ be a function that satisfies the assumptions in~\cref{sec:assum1}.
Then, for any iteration $t\geq1$ with momentum parameter $\beta\in(0,1)$, the cumulative stochastic error of the virtual momentum in~\cref{alg:momentum}, evaluated over $T$ iterations, satisfies:
\als
\frac{1}{T}\sum_{t=1}^T\Exp{\normsq{\hat{\beps}_t}}\leq\frac{2L\Delta}{\beta T}+\beta\sigma^2+\frac{L^2\eta^2}{\beta^2T}\sum_{t=1}^T\Exp{\normsq{\bbm_t}}
\eals

\begin{proof}[Proof of Lemma~\ref{lem:error-ideal-momentum}]
Using~\cref{eq:moment_ideal}, we know that the virtual stochastic error is:
\als
\hat{\beps}_t:=\hat{\bbm}_t-\df{\bx_t}=\sum_{k\in[t]/\T(t)}\bg_{t,k}+\sum_{k\in\T(t)}\Bar{\bg}_{t,k}-\df{\bx_t}
\eals
We will define the expected stochastic error as:
\als
\Bar{\beps}_t=\sum_{k=1}^t\Bar{\bg}_{t,k}-\df{\bx_t}
\eals
We can tell that $\ExpB{\hat{\beps}_t-\Bar{\beps}_t}=0$ and thus $\Exp{\normsq{\hat{\beps}_t}}=\Exp{\normsq{\hat{\beps}_t-\Bar{\beps}_t}}+\Exp{\normsq{\Bar{\beps}_t}}$.
\als
&\Exp{\normsq{\hat{\beps}_t-\Bar{\beps}_t}}=\Exp{\normsq{\sum_{k\in[t]/\T(t)}\left(\bg_{t,k}-\Bar{\bg}_{t,k}\right)}}=\sum_{k\in[t]/\T(t)}\Exp{\normsq{\bg_{t,k}-\Bar{\bg}_{t,k}}} \\
\leq&\sum_{k=1}^t\beta^2(1-\beta)^{2(t-k)}\Exp{\normsq{\bg_k-\df{\bx_k}}}\leq\beta^2\sigma^2\sum_{k=1}^t(1-\beta)^{2(t-k)}\leq\beta\sigma^2
\eals
Where we used the fact that the sum is a martingale sequence, in the first inequality we added all the missing terms to the sum, then used the bounded variance, and finally bounded the geometric sum.

To bound the expected error we will find its update step:
\als
\Bar{\beps}_t=&\sum_{k=1}^t\Bar{\bg}_{t,k}-\df{\bx_t}=(1-\beta)\sum_{k=1}^{t-1}\Bar{\bg}_{t-1,k}-(1-\beta)\df{\bx_t} \\
=&(1-\beta)\left(\Bar{\beps}_{t-1}+\df{\bx_{t-1}}-\df{\bx_t}\right)
\eals
Where we used $\Bar{\bg}_{t,t}=\beta\df{\bx_t}$ and $\Bar{\bg}_{t,k}=(1-\beta)\Bar{\bg}_{t-1,k}$.
Using this update rule, we get:
\als
\normsq{\Bar{\beps}_{t+1}}=&(1-\beta)^2\normsq{\Bar{\beps}_t+\df{\bx_t}-\df{\bx_{t+1}}}\leq(1-\beta)\normsq{\Bar{\beps}_t}+\frac{(1-\beta)^2}{\beta}\normsq{\df{\bx_t}-\df{\bx_{t+1}}} \\
\leq&(1-\beta)\normsq{\Bar{\beps}_t}+\frac{(1-\beta)^2L^2}{\beta}\normsq{\bx_t-\bx_{t+1}}\leq(1-\beta)\normsq{\Bar{\beps}_t}+\frac{L^2\eta^2}{\beta}\normsq{\bbm_t}
\eals
Where we used the inequality $\normsq{\bx+\by}\leq\frac{1}{p}\normsq{\bx}+\frac{1}{1-p}\normsq{\by}, \forall\bx,\by\in\real^d,p\in(0,1)$, with $p=\beta$, the smoothness of $f$, and the update rule $\bx_{t+1}=\bx_t-\eta\bbm_t$.
Rearranging the terms and summing over from 1 to $T-1$ yields:
\als
\normsq{\Bar{\beps}_T}-\normsq{\Bar{\beps}_1}+\beta\sum_{t=1}^{T-1}\normsq{\Bar{\beps}_t}\leq\frac{L^2\eta^2}{\beta}\sum_{t=1}^{T-1}\normsq{\bbm_t}
\eals
Since $\Bar{\beps}_1=-(1-\beta)\df{\bx_1}$, we get:
\als
\beta\sum_{t=1}^T\normsq{\Bar{\beps}_t}\leq&(1-\beta)^2\normsq{\df{\bx_1}}-(1-\beta)\normsq{\Bar{\beps}_T}+\frac{L^2\eta^2}{\beta}\sum_{t=1}^{T-1}\normsq{\bbm_t}\leq2L\Delta+\frac{L^2\eta^2}{\beta}\sum_{t=1}^{T-1}\normsq{\bbm_t}
\eals
Dividing by $\beta T$:
\als
\frac{1}{T}\sum_{t=1}^T\normsq{\Bar{\beps}_t}\leq\frac{2L\Delta}{\beta T}+\frac{L^2\eta^2}{\beta^2T}\sum_{t=1}^{T-1}\normsq{\bbm_t}
\eals
In total, the virtual error is bounded by:
\als
\frac{1}{T}\sum_{t=1}^T\normsq{\hat{\beps}_t}\leq\frac{2L\Delta}{\beta T}+\beta\sigma^2+\frac{L^2\eta^2}{\beta^2T}\sum_{t=1}^T\normsq{\bbm_t}
\eals
\end{proof}
\end{lemma}

We will now bound the momentum:
\begin{lemma}\label{lem:reg_m}
Let $f:\real^d\to\real$ be a function that satisfies the assumptions in~\cref{sec:assum1}, and define the stochastic error as $\beps_t:=\bbm_t-\df{\bx_t}$.
Consider the update rule $\bx_{t+1}:=\bx_t-\eta\bbm_t$ with a learning rate $\eta\leq\frac{1}{2L}$, then: 
\als
\frac{1}{T}\sum_{t=1}^T\Exp{\normsq{\bbm_t}}\leq\frac{4\Delta}{\eta T}+\frac{2}{T}\sum_{t=1}^T\Exp{\normsq{\beps_t}}
\eals
\end{lemma}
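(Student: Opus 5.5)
We prove Lemma~\ref{lem:reg_m} by a one-step descent argument based on $L$-smoothness, followed by telescoping. The only estimates needed are the descent inequality and the decomposition $\df{\bx_t}=\bbm_t-\beps_t$.

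By $L$-smoothness and the update $\bx_{t+1}=\bx_t-\eta\bbm_t$,
\als
\f{\bx_{t+1}}\leq\f{\bx_t}-\eta\dotprod{\df{\bx_t}}{\bbm_t}+\frac{L\eta^2}{2}\normsq{\bbm_t}.
\eals
Substituting $\df{\bx_t}=\bbm_t-\beps_t$ turns the inner product into $\normsq{\bbm_t}-\dotprod{\beps_t}{\bbm_t}$. Young's inequality gives $\dotprod{\beps_t}{\bbm_t}\leq\frac{1}{4}\normsq{\bbm_t}+\normsq{\beps_t}$, and therefore
\als
\f{\bx_{t+1}}\leq\f{\bx_t}-\eta\left(\frac{3}{4}-\frac{L\eta}{2}\right)\normsq{\bbm_t}+\eta\normsq{\beps_t}.
\eals
With $\eta\leq\frac{1}{2L}$ we have $\frac{L\eta}{2}\leq\frac14$, so the coefficient of $\normsq{\bbm_t}$ is at least $\frac{\eta}{2}$. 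This yields
\als
\frac{\eta}{2}\normsq{\bbm_t}\leq\f{\bx_t}-\f{\bx_{t+1}}+\eta\normsq{\beps_t}.
\eals

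Next we sum over $t=1,\ldots,T$. The function values telescope to $\f{\bx_1}-\f{\bx_{T+1}}$, which is at most $\f{\bx_1}-f^*=\Delta$ because $f^*$ is a lower bound. Taking expectations, multiplying by $\frac{2}{\eta T}$, and using linearity gives
\als
\frac{1}{T}\sum_{t=1}^T\Exp{\normsq{\bbm_t}}\leq\frac{2\Delta}{\eta T}+\frac{2}{T}\sum_{t=1}^T\Exp{\normsq{\beps_t}}.
\eals
This is stronger than the claimed bound, since $\frac{2\Delta}{\eta T}\leq\frac{4\Delta}{\eta T}$. The looser constant $4\Delta$ in the statement leaves room for a cruder Young split, e.g.\ $\dotprod{\beps_t}{\bbm_t}\leq\frac12\normsq{\bbm_t}+\frac12\normsq{\beps_t}$, which leaves coefficient $\frac{\eta}{4}$ on $\normsq{\bbm_t}$.

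The only step needing care is choosing the Young's-inequality constant so that two requirements hold together: the residual coefficient on $\normsq{\bbm_t}$ stays bounded below under $\eta\leq\frac{1}{2L}$, and the coefficient on $\normsq{\beps_t}$ comes out as exactly $2$ after normalization. The choice $\frac14\normsq{\bbm_t}+\normsq{\beps_t}$ achieves both. No delay structure enters this step, because the argument is deterministic per iteration and involves no conditional expectations.
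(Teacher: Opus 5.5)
Your proof is correct and follows the paper's argument: the $L$-smoothness descent inequality, the substitution $\df{\bx_t}=\bbm_t-\beps_t$, Young's inequality, and telescoping against $f^*$. The only difference is the Young split. The paper uses the symmetric $\dotprod{\beps_t}{\bbm_t}\leq\frac12\normsq{\bbm_t}+\frac12\normsq{\beps_t}$, which leaves coefficient $\frac{\eta}{4}$ and hence $\frac{4\Delta}{\eta T}$, exactly as you anticipated. Your asymmetric split gives the slightly sharper $\frac{2\Delta}{\eta T}$ with the same factor $2$ on the error term; note the symmetric split also yields that factor $2$, so your choice only improves the $\Delta$ constant.
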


\begin{proof}[Proof of Lemma~\ref{lem:reg_m}]
By the smoothness property of $f$, it follows since,
\als
\f{\bx_{t+1}}-\f{\bx_t}\leq&\dotprod{\df{\bx_t}}{\bx_{t+1}-\bx_t}+\frac{L}{2}\normsq{\bx_{t+1}-\bx_t}=-\eta\dotprod{\df{\bx_t}}{\bbm_t}+\frac{L\eta^2}{2}\normsq{\bbm_t} \\
=&-\eta\normsq{\bbm_t}+\eta\dotprod{\beps_t}{\bbm_t}+\frac{L\eta^2}{2}\normsq{\bbm_t}\leq-\eta\normsq{\bbm_t}+\frac{\eta}{2}\normsq{\beps_t}+\frac{\eta}{2}\normsq{\bbm_t}+\frac{L\eta^2}{2}\normsq{\bbm_t} \\
\leq&-\frac{\eta}{4}\normsq{\bbm_t}+\frac{\eta}{2}\normsq{\beps_t}
\eals
Here, the first equality employs the definition of the update rule for $\bx_{t+1}$ with learning rate $\eta$, and the second follows the definition of the stochastic error $\beps_t:=\bbm_t-\df{\bx_t}$.
The next inequality uses the Cauchy-Schwarz inequality $\dotprod{\ba}{\bb}\leq\frac{1}{2}\normsq{\ba}+\frac{1}{2}\normsq{\bb}, \forall\ba,\bb\in\real^d$ and the last inequality holds by setting $\eta\leq\frac{1}{2L}$.

Rearranging the above, we obtain:
\als
\normsq{\bbm_t}\leq4\frac{\f{\bx_t}-\f{\bx_{t+1}}}{\eta}+2\normsq{\beps_t}
\eals
Averaging over all $T$ iterations and taking the expectation gives us:
\als
\frac{1}{T}\sum_{t=1}^T\Exp{\normsq{\bbm_t}}\leq\frac{4}{\eta T}\sum_{t=1}^T\ExpB{\f{\bx_t}-\f{\bx_{t+1}}}+\frac{2}{T}\sum_{t=1}^T\Exp{\normsq{\beps_t}}\leq\frac{4\Delta}{\eta T}+\frac{2}{T}\sum_{t=1}^T\Exp{\normsq{\beps_t}}
\eals
\end{proof}

A similar bound we be used for the gradients:
\begin{lemma}\label{lem:reg_g}
Let $f:\real^d\to\real$ be a function that satisfies the assumptions in~\cref{sec:assum1}, and define the stochastic error as $\beps_t:=\bbm_t-\df{\bx_t}$.
Consider the update rule $\bx_{t+1}:=\bx_t-\eta\bbm_t$ with a learning rate $\eta\leq\frac{1}{L}$, then: 
\als
\frac{1}{T}\sum_{t=1}^T\Exp{\normsq{\df{\bx_t}}}\leq\frac{2\Delta}{\eta T}+\frac{1}{T}\sum_{t=1}^T\Exp{\normsq{\beps_t}}
\eals
\end{lemma}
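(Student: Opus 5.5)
We mirror the argument of Lemma~\ref{lem:reg_m}, but this time we keep the gradient term rather than the momentum term. The starting point is the descent inequality from $L$-smoothness applied to the update $\bx_{t+1}=\bx_t-\eta\bbm_t$:
\als
\f{\bx_{t+1}}-\f{\bx_t}\leq-\eta\dotprod{\df{\bx_t}}{\bbm_t}+\frac{L\eta^2}{2}\normsq{\bbm_t}.
\eals
We then rewrite the inner product using the polarization identity $-\dotprod{\ba}{\bb}=\frac{1}{2}\normsq{\ba-\bb}-\frac{1}{2}\normsq{\ba}-\frac{1}{2}\normsq{\bb}$, taking $\ba=\df{\bx_t}$ and $\bb=\bbm_t$. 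Since $\ba-\bb=-\beps_t$, this gives
\als
\f{\bx_{t+1}}-\f{\bx_t}\leq-\frac{\eta}{2}\normsq{\df{\bx_t}}+\frac{\eta}{2}\normsq{\beps_t}-\frac{\eta}{2}\left(1-L\eta\right)\normsq{\bbm_t}.
\eals

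Under the hypothesis $\eta\leq\frac{1}{L}$, the coefficient $1-L\eta$ is nonnegative, so the last term can be dropped. Rearranging then yields
\als
\normsq{\df{\bx_t}}\leq\frac{2\left(\f{\bx_t}-\f{\bx_{t+1}}\right)}{\eta}+\normsq{\beps_t}.
\eals
Next we take expectations and average over $t=1,\ldots,T$. The function-value differences telescope to $\f{\bx_1}-\f{\bx_{T+1}}$. By the lower-boundedness assumption this is at most $\f{\bx_1}-f^*=\Delta$, and dividing by $\eta T$ gives exactly the claimed bound
\als
\frac{1}{T}\sum_{t=1}^T\Exp{\normsq{\df{\bx_t}}}\leq\frac{2\Delta}{\eta T}+\frac{1}{T}\sum_{t=1}^T\Exp{\normsq{\beps_t}}.
\eals

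There is no real obstacle in this argument. The only point needing care is to use the exact polarization identity rather than Young's inequality. Young's inequality would lose constants and give something like $\frac{4\Delta}{\eta T}$ with a factor $2$ on the error term. The exact identity is what lets the weaker condition $\eta\leq\frac{1}{L}$ suffice, because it makes the $\normsq{\bbm_t}$ coefficient $-\frac{\eta}{2}(1-L\eta)\leq0$.
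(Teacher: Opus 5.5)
Your proof is correct and is essentially the paper's argument. The paper also starts from the smoothness descent inequality and substitutes $\bbm_t=\df{\bx_t}+\beps_t$. It then expands and bounds the cross term $-2(1-L\eta)\dotprod{\df{\bx_t}}{\beps_t}$ by $(1-L\eta)\left(\normsq{\df{\bx_t}}+\normsq{\beps_t}\right)$ using $1-L\eta\geq0$, which gives the same per-step bound $\f{\bx_{t+1}}-\f{\bx_t}\leq-\frac{\eta}{2}\normsq{\df{\bx_t}}+\frac{\eta}{2}\normsq{\beps_t}$ before telescoping.

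The slack discarded by that Young step is exactly $\frac{\eta}{2}(1-L\eta)\normsq{\bbm_t}$, the same term you drop. So your closing remark that Young's inequality must cost constants does not apply to how the paper uses it; your polarization identity is a tidier bookkeeping of the same step.
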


\begin{proof}[Proof of Lemma~\ref{lem:reg_g}]
By the smoothness property of $f$, it follows since,
\als
\f{\bx_{t+1}}-\f{\bx_t}\leq&\dotprod{\df{\bx_t}}{\bx_{t+1}-\bx_t}+\frac{L}{2}\normsq{\bx_{t+1}-\bx_t} \\
=&-\eta\dotprod{\df{\bx_t}}{\df{\bx_t}+\beps_t}+\frac{L\eta^2}{2}\normsq{\df{\bx_t}+\beps_t} \\
=&\frac{\eta}{2}\left(-(2-L\eta)\normsq{\df{\bx_t}}-2(1-L\eta)\dotprod{\df{\bx_t}}{\beps_t}+L\eta\normsq{\beps}\right) \\
\leq&\frac{\eta}{2}\left(((1-L\eta)-(2-L\eta))\normsq{\df{\bx_t}}+((1-L\eta)+L\eta)\normsq{\beps}\right) \\
=&-\frac{\eta}{2}\normsq{\df{\bx_t}}+\frac{\eta}{2}\normsq{\beps_t}
\eals
Here, the first equality employs the definition of the update rule for $\bx_{t+1}$ with learning rate $\eta$, and the second follows the definition of the stochastic error $\beps_t:=\bbm_t-\df{\bx_t}$.
The second inequality uses the Cauchy-Schwarz inequality $\dotprod{\ba}{\bb}\leq\frac{1}{2}\normsq{\ba}+\frac{1}{2}\normsq{\bb}, \forall\ba,\bb\in\real^d$ and remembering that $1-L\eta\geq0$.

Rearranging the above, we obtain:
\als
\normsq{\df{\bx_t}}\leq2\frac{\f{\bx_t}-\f{\bx_{t+1}}}{\eta}+\normsq{\beps_t}
\eals
Averaging over all $T$ iterations and taking the expectation gives us:
\als
\frac{1}{T}\sum_{t=1}^T\Exp{\normsq{\df{\bx_t}}}\leq\frac{2}{\eta T}\sum_{t=1}^T\ExpB{\f{\bx_t}-\f{\bx_{t+1}}}+\frac{1}{T}\sum_{t=1}^T\Exp{\normsq{\beps_t}}\leq\frac{2\Delta}{\eta T}+\frac{1}{T}\sum_{t=1}^T\Exp{\normsq{\beps_t}}
\eals
\end{proof}

\begin{proof}[Proof of~\Cref{thm:momentum}]
Combining the results of all the previous lemmas gives the following:
\als
&\frac{1}{T}\sum_{t=1}^T\Exp{\normsq{\beps_t}}\leq\frac{2}{T}\sum_{t=1}^T\Exp{\normsq{\hat{\beps}_t}}+\frac{1}{4T}\sum_{t=1}^T\Exp{\normsq{\df{\bx_t}}} \\
\leq&\frac{1}{4T}\sum_{t=1}^T\Exp{\normsq{\df{\bx_t}}}+\frac{4L\Delta}{\beta T}+2\beta\sigma^2+\frac{2L^2\eta^2}{\beta^2T}\sum_{t=1}^T\Exp{\normsq{\bbm_t}} \\
\leq&\frac{1}{4T}\sum_{t=1}^T\Exp{\normsq{\df{\bx_t}}}+\frac{4L\Delta}{\beta T}+2\beta\sigma^2+\frac{8L^2\Delta\eta}{\beta^2T}+\frac{4L^2\eta^2}{\beta^2T}\sum_{t=1}^T\Exp{\normsq{\beps_t}}
\eals
The first inequality is Lemma~\ref{lem:dis-momentum}, the second is Lemma~\ref{lem:error-ideal-momentum}, and the third is Lemma~\ref{lem:reg_m}.
Using $\eta\leq\frac{\beta}{\sqrt{8}L}$ will yield the following:
\als
\frac{1}{T}\sum_{t=1}^T\Exp{\normsq{\beps_t}}\leq&\frac{1}{2T}\sum_{t=1}^T\Exp{\normsq{\beps_t}}+\frac{1}{4T}\sum_{t=1}^T\Exp{\normsq{\df{\bx_t}}}+\frac{7L\Delta}{\beta T}+2\beta\sigma^2
\eals
Moving sides and multiplying by 2:
\als
\frac{1}{T}\sum_{t=1}^T\Exp{\normsq{\beps_t}}\leq\frac{1}{2T}\sum_{t=1}^T\Exp{\normsq{\df{\bx_t}}}+\frac{14L\Delta}{\beta T}+4\beta\sigma^2
\eals
Using Lemma~\ref{lem:reg_g}, we get:
\als
\frac{1}{T}\sum_{t=1}^T\Exp{\normsq{\df{\bx_t}}}\leq\frac{1}{2T}\sum_{t=1}^T\Exp{\normsq{\df{\bx_t}}}+\frac{14L\Delta}{\beta T}+4\beta\sigma^2+\frac{2\Delta}{\eta T}
\eals
Moving sides and multiplying by 2:
\als
\frac{1}{T}\sum_{t=1}^T\Exp{\normsq{\df{\bx_t}}}\leq\frac{28L\Delta}{\beta T}+8\beta\sigma^2+\frac{4\Delta}{\eta T}
\eals
Inputting $\eta=\frac{\beta}{\sqrt{8}L}\leq\frac{1}{2L}$, we get:
\als
\frac{1}{T}\sum_{t=1}^T\Exp{\normsq{\df{\bx_t}}}\leq\frac{40L\Delta}{\beta T}+8\beta\sigma^2
\eals
Choosing $\beta=\min\left\{\frac{1}{16(M-1)},\frac{\sqrt{5L\Delta}}{\sigma\sqrt{T}}\right\}$, we get:
\als
\frac{1}{T}\sum_{t=1}^T\Exp{\normsq{\df{\bx_t}}}\leq\frac{640L\Delta(M-1)}{T}+\frac{16\sigma\sqrt{5L\Delta}}{\sqrt{T}}
\eals
Lastly, we will input the value of $\beta$ to get the learning rate $\eta=\min\left\{\frac{1}{32\sqrt{2}L(M-1)},\frac{\sqrt{5\Delta}}{2\sigma\sqrt{2LT}}\right\}$.
\end{proof}

\section{Convex Analysis}\label{sec:convex-app}

For our analysis we will define $\bq_t:=\alpha_t\bd_t$, $\beps_t:=\bq_t-\alpha_t\df{\bx_t}$.
We will also use $G^*:=\df{\bx^*}$.

We will use these following theorem and lemmas for our bound:
\begin{theorem}[\citet{cutkosky2019anytime}]\label{thm:anytime}
Let $f:\K\to\real$ be a convex function.
Also let $\{\alpha_t>0\}_t$ and $\{\bw_t\in\K\}_t$.
Let $\{\bx_t\}$ be the $\{\alpha_\tau\}_{\tau=1}^t$ weighted average of $\{\bw_\tau\}_{\tau=1}^t$, meaning: $\alpha_{1:t}\bx_t=\sum_{\tau=1}^t\alpha_\tau\bw_\tau$.
Then the following holds for all $t\geq1, x\in\K$:
\als
\alpha_{1:t}(\f{\bx_t}-\f{\bx})\leq\sum_{\tau=1}^t\alpha_\tau\dotprod{\df{\bx_\tau}}{\bw_\tau-\bx}
\eals
\end{theorem}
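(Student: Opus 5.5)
The statement to prove is the anytime lemma of \citet{cutkosky2019anytime}. The plan is to use convexity at the query point $\bx_\tau$, not at $\bw_\tau$, and then telescope the mismatch between $\bw_\tau$ and $\bx_\tau$ using the averaging identity.

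First, write $\alpha_\tau\bw_\tau=\alpha_{1:\tau}\bx_\tau-\alpha_{1:\tau-1}\bx_{\tau-1}$, which follows directly from $\alpha_{1:t}\bx_t=\sum_{\tau\le t}\alpha_\tau\bw_\tau$ (with $\alpha_{1:0}=0$). This gives
\als
\alpha_\tau(\bw_\tau-\bx)=\alpha_\tau(\bx_\tau-\bx)+\alpha_{1:\tau-1}(\bx_\tau-\bx_{\tau-1}).
\eals
Substituting this into the right-hand side yields two sums:
\als
\sum_{\tau=1}^t\alpha_\tau\dotprod{\df{\bx_\tau}}{\bx_\tau-\bx}+\sum_{\tau=1}^t\alpha_{1:\tau-1}\dotprod{\df{\bx_\tau}}{\bx_\tau-\bx_{\tau-1}}.
\eals

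Next, apply the gradient inequality for convex $f$ to each inner product:
\als
\dotprod{\df{\bx_\tau}}{\bx_\tau-\bx}\ge\f{\bx_\tau}-\f{\bx},\qquad
\dotprod{\df{\bx_\tau}}{\bx_\tau-\bx_{\tau-1}}\ge\f{\bx_\tau}-\f{\bx_{\tau-1}}.
\eals
Both multipliers $\alpha_\tau$ and $\alpha_{1:\tau-1}$ are nonnegative, so the inequalities survive the weighting. The right-hand side is therefore at least
\als
\sum_{\tau=1}^t\Big[\alpha_\tau(\f{\bx_\tau}-\f{\bx})+\alpha_{1:\tau-1}(\f{\bx_\tau}-\f{\bx_{\tau-1}})\Big].
\eals

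The remaining step is telescoping, and it is the only place needing care. Rewrite the summand as
\als
\alpha_{1:\tau}(\f{\bx_\tau}-\f{\bx})-\alpha_{1:\tau-1}(\f{\bx_{\tau-1}}-\f{\bx}),
\eals
which holds because $\alpha_\tau+\alpha_{1:\tau-1}=\alpha_{1:\tau}$. Summing over $\tau=1,\ldots,t$, everything cancels except $\alpha_{1:t}(\f{\bx_t}-\f{\bx})$, since the $\tau=1$ boundary term carries the factor $\alpha_{1:0}=0$. That is exactly the claimed inequality.

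The main subtlety I expect is making sure $\bx_\tau\in\K$, so that $f$ and its gradient are defined there. This holds because each $\bx_\tau$ is a convex combination of points of $\K$, and $\K$ is convex. Gradients are used as subgradients of convex $f$ on $\K$; since $f$ is differentiable here, no further issue arises.
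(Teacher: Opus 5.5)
Your proof is correct and is essentially the paper's argument. The paper proves the claim by induction on $t$, and its induction step uses exactly your identity $\alpha_{t+1}\bw_{t+1}=\alpha_{1:t+1}\bx_{t+1}-\alpha_{1:t}\bx_t$ together with the same two convexity inequalities at $\bx_{t+1}$ (against $\bx$ and against $\bx_t$), so your telescoping sum is simply that induction written out in unrolled form.
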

Concretely, the theorem implies that the excess loss of the weighted average $\bx_t$ can be related to the weighted regret $\sum_{\tau=1}^t\alpha_\tau\dotprod{\df{\bx_\tau}}{\bw_\tau-\bx}$.

\begin{proof}[Proof of~\cref{thm:anytime}]
We will prove by induction.

\textbf{Induction Basis:}
For $t=1$, we get that:
\als
\alpha_1(\f{\bx_1}-\df{\bx})\leq\alpha_1\dotprod{\df{\bx_1}}{\bx_1-\bx}=\alpha_1\dotprod{\df{\bx_1}}{\bw_1-\bx}
\eals
Where the inequality is the convexity of $f$, and the equality is because $\bw_1=\bx_1$.

\textbf{Induction Assumption:}
We will assume that~\cref{thm:anytime} hold true for some $t\geq1$, meaning:
\als
\alpha_{1:t}(\f{\bx_t}-\f{\bx})\leq\sum_{\tau=1}^t\alpha_\tau\dotprod{\df{\bx_\tau}}{\bw_\tau-\bx}
\eals

\textbf{Induction Step:}
We will prove for $t+1$.
\als
\alpha_{1:t+1}(\f{\bx_{t+1}}-\f{\bx})=\alpha_{1:t}(\f{\bx_t}-\f{\bx})+\alpha_{1:t}(\f{\bx_{t+1}}-\f{\bx_t})+\alpha_{t+1}(\f{\bx_{t+1}}-\f{\bx})
\eals
We got this by adding and subtracting $\alpha_{1:t}\f{\bx_t}$.
The first term can be bounded by the induction assumption, and the second and third step will each be bounded using the convexity of $f$.
\als
\alpha_{1:t+1}(\f{\bx_{t+1}}-\f{\bx})=\sum_{\tau=1}^t\alpha_\tau\dotprod{\df{\bx_\tau}}{\bw_\tau-\bx}+\alpha_{1:t}\dotprod{\df{\bx_{t+1}}}{\bx_{t+1}-\bx_t}+\alpha_{t+1}\dotprod{\df{\bx_{t+1}}}{\bx_{t+1}-\bx}
\eals
Combining the last two terms, we get:
\als
&\alpha_{1:t}\dotprod{\df{\bx_{t+1}}}{\bx_{t+1}-\bx_t}+\alpha_{t+1}\dotprod{\df{\bx_{t+1}}}{\bx_{t+1}-\bx} \\
=&\dotprod{\df{\bx_{t+1}}}{\alpha_{1:t+1}\bx_{t+1}-\alpha_{1:t}\bx_t-\alpha_{t+1}\bx}=\alpha_{t+1}\dotprod{\df{\bx_{t+1}}}{\bw_{t+1}-\bx}
\eals
Where in the last term we used $\alpha_{1:t+1}\bx_{t+1}-\alpha_{1:t}\bx_t=\alpha_{t+1}\bw_{t+1}$.
In total, we got:
\als
\alpha_{1:t+1}(\f{\bx_t+1}-\f{\bx})\leq\sum_{\tau=1}^{t+1}\alpha_\tau\dotprod{\df{\bx_\tau}}{\bw_\tau-\bx}
\eals
We proved the induction step.
\end{proof}

\begin{lemma}\label{lem:OGD+_Guarantees}
Let $\eta>0$, and $\K\subset\real^d$ be a convex domain of bounded diameter $D$, also let $\{\bq_t\in\real^d\}_{t=1}^T$ be a sequence of arbitrary vectors.
Then for any starting point $\bw_1\in\real^d$, and an update rule $\bw_{t+1}=\Pi_\K\left(w_t-\eta\bq_t\right),~\forall t\geq 1$, the following holds $\forall\bx\in\K$:
\als
\sum_{\tau=1}^t\dotprod{\bq_\tau}{\bw_{\tau+1}-\bx}\leq\frac{D^2}{2\eta}-\frac{1}{2\eta}\sum_{\tau=1}^t\normsq{\bw_\tau-\bw_{\tau+1}}
\eals
\end{lemma}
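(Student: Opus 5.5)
The plan is the standard projected online gradient descent argument, applied one step at a time and then telescoped. The projected point is characterized variationally: since $\bw_{\tau+1}=\Pi_\K(\bw_\tau-\eta\bq_\tau)$ is the minimizer over $\K$ of $\normsq{\by-(\bw_\tau-\eta\bq_\tau)}$, first-order optimality gives, for every $\bx\in\K$,
\als
\dotprod{\bw_{\tau+1}-\bw_\tau+\eta\bq_\tau}{\bx-\bw_{\tau+1}}\geq0 .
\eals
Rearranging yields $\eta\dotprod{\bq_\tau}{\bw_{\tau+1}-\bx}\leq\dotprod{\bw_\tau-\bw_{\tau+1}}{\bw_{\tau+1}-\bx}$.

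Next, the three-point identity $2\dotprod{\ba-\bb}{\bb-\bx}=\normsq{\ba-\bx}-\normsq{\bb-\bx}-\normsq{\ba-\bb}$, with $\ba=\bw_\tau$ and $\bb=\bw_{\tau+1}$, turns this into
\als
\dotprod{\bq_\tau}{\bw_{\tau+1}-\bx}\leq\frac{1}{2\eta}\left(\normsq{\bw_\tau-\bx}-\normsq{\bw_{\tau+1}-\bx}-\normsq{\bw_\tau-\bw_{\tau+1}}\right).
\eals
Summing over $\tau=1,\dots,t$ telescopes the distance terms to $\frac{1}{2\eta}\left(\normsq{\bw_1-\bx}-\normsq{\bw_{t+1}-\bx}\right)$. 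We then drop the nonpositive term $-\normsq{\bw_{t+1}-\bx}$ and keep the negative sum $-\frac{1}{2\eta}\sum_\tau\normsq{\bw_\tau-\bw_{\tau+1}}$ exactly as it appears in the claim.

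The only delicate point is bounding $\normsq{\bw_1-\bx}\leq D^2$, which needs $\bw_1\in\K$. The statement says ``any starting point $\bw_1\in\real^d$,'' but in \cref{alg:mu2} we have $\bw_1=\bx_0\in\K$, so in that case the diameter bound applies directly. If $\bw_1\notin\K$, the $\tau=1$ term needs separate treatment, since the optimality inequality still holds but the telescoped initial distance is no longer controlled by $D$. Handling it would mean either adding the hypothesis $\bw_1\in\K$, or noting that every later iterate $\bw_\tau$ with $\tau\geq2$ lies in $\K$ and absorbing the first step. I would state the lemma under $\bw_1\in\K$, which is how it is used, and that gives exactly $\frac{D^2}{2\eta}-\frac{1}{2\eta}\sum_{\tau=1}^t\normsq{\bw_\tau-\bw_{\tau+1}}$.
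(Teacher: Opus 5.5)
Your proof is correct and follows the paper's argument: first-order optimality of the projection, the three-point identity, telescoping, and the diameter bound on $\normsq{\bw_1-\bx}$. Your caveat about $\bw_1$ is also justified. The paper's proof uses $\bw_1\in\K$ at that last step without saying so, and without that hypothesis the statement is false: take $\bq_1=\mathbf{0}$ and $\bw_1$ at distance more than $D$ from $\K$, and the right-hand side is negative at $t=1$. So adding $\bw_1\in\K$, which holds in \cref{alg:mu2}, is the right fix; your alternative of absorbing the first step cannot work.
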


\begin{proof}[Proof of Lemma~\ref{lem:OGD+_Guarantees}]
\label{proof:OGD+_Guarantees}
The update rule $\bw_{t+1}=\proj{\bw_t-\eta\bq_t}$ can be re-written as a convex optimization problem over $\K$:
\als
w_{t+1}=\proj{\bw_t-\eta\bq_t}=\underset{\bx\in\K}{\arg\min}\left\{\normsq{\bw_t-\eta\bq_t-\bx}\right\}=\underset{\bx\in\K}{\arg\min}\left\{\dotprod{\bq_t}{\bx-\bw_t}+\frac{1}{2\eta}\normsq{\bx-\bw_t}\right\}
\eals
The first equality is our update definition, the second is by the definition of the projection operator, and then we rewrite it in a way that does not affect the minimum point.

Now, since $\bw_{t+1}$ is the minimal point of the above convex problem, then from optimality conditions we obtain:
\als
\dotprod{\bq_t+\frac{1}{\eta}(w_{t+1}-\bw_t)}{\bx-\bw_{t+1}}\geq0,\quad\forall x\in\K
\eals
Re-arranging the above, we get that:
\als
\dotprod{\bq_t}{\bw_{t+1}-\bx}\leq\frac{1}{\eta}\dotprod{\bw_t-\bw_{t+1}}{\bw_{t+1}-\bx}=\frac{1}{2\eta}\normsq{\bw_t-\bx}-\frac{1}{2\eta}\normsq{\bw_{t+1}-\bx}-\frac{1}{2\eta}\normsq{\bw_t-\bw_{t+1}}
\eals
Where the equality is an algebraic manipulation.
After summing over $t$ we get: 
\als
&\sum_{\tau=1}^t\dotprod{\bq_\tau}{\bw_{\tau+1}-\bx}\leq\frac{1}{2\eta}\sum_{\tau=1}^t\left(\normsq{\bw_\tau-\bx}-\normsq{\bw_{\tau+1}-\bx}-\normsq{\bw_\tau-\bw_{\tau+1}}\right) \\
&=\frac{\normsq{\bw_1-\bx}-\normsq{\bw_{t+1}-\bx}}{2\eta}-\frac{1}{2\eta}\sum_{\tau=1}^t\normsq{\bw_\tau-\bw_{\tau+1}}\leq\frac{D^2}{2\eta}-\frac{1}{2\eta}\sum_{\tau=1}^t\normsq{\bw_\tau-\bw_{\tau+1}}
\eals
Where the second line is due to splitting the sum into two sums, and using the fact that the first one is a telescopic sum, and lastly, we use the diameter of $\K$.
This establishes the lemma.
\end{proof}

\begin{lemma}\label{lem:smooth-conv}
If $f:\K\to\real$ is convex and $L$-smooth, and $\bx^*=\underset{\bx\in\K}{\arg\min}\f{\bx}$, then $\forall\bx\in\K$:
\als
\normsq{\df{\bx}-\df{\bx^*}}\leq2L(\f{\bx}-\f{\bx^*})
\eals
\end{lemma}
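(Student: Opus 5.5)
The plan is to reduce the claim to the unconstrained inequality of Lemma~\ref{lem:smooth-functions}, applied to a suitably shifted function, and then use first-order optimality of $\bx^*$ over $\K$ to remove the linear term. Throughout I will assume, as is implicit in the smoothness assumption of~\cref{sec:assum2}, that $f$ is (or extends to) a convex $L$-smooth function on all of $\real^d$. The intermediate point used below may leave $\K$, so this extension is genuinely needed.

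\textbf{Step 1 (Bregman shift).} Define
\als
h(\by):=\f{\by}-\f{\bx^*}-\dotprod{\df{\bx^*}}{\by-\bx^*}.
\eals
Then $h$ is convex and $L$-smooth, with $\nabla h(\by)=\df{\by}-\df{\bx^*}$. Hence $\nabla h(\bx^*)=0$, and by convexity $\bx^*$ is a global minimizer of $h$ on $\real^d$ with $h(\bx^*)=0$. In particular $\inf h=0$.

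\textbf{Step 2 (apply the descent argument).} I will repeat the argument of Lemma~\ref{lem:smooth-functions} for $h$. Taking $\by=\bx-\frac{1}{L}\nabla h(\bx)$ in the smoothness upper bound gives
\als
0\leq h(\by)\leq h(\bx)-\frac{1}{2L}\normsq{\nabla h(\bx)}.
\eals
Rearranging yields
\als
\normsq{\df{\bx}-\df{\bx^*}}\leq 2L\left(\f{\bx}-\f{\bx^*}-\dotprod{\df{\bx^*}}{\bx-\bx^*}\right).
\eals

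\textbf{Step 3 (drop the linear term).} Since $\bx^*$ minimizes the convex function $f$ over the convex set $\K$, the first-order optimality condition gives $\dotprod{\df{\bx^*}}{\bx-\bx^*}\geq0$ for every $\bx\in\K$. Subtracting this nonnegative quantity can only increase the right-hand side of Step~2, which gives exactly the claimed bound.

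The only delicate point is Step~2: the point $\bx-\frac{1}{L}\nabla h(\bx)$ need not lie in $\K$, so the argument relies on $f$ being convex and $L$-smooth beyond $\K$. If one insists on working only inside $\K$, the fallback is the standard co-coercivity inequality for convex $L$-smooth functions. Its usual proof has the same extension requirement, so I would state the assumption explicitly rather than try to avoid it.
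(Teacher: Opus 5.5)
Your argument is correct and is essentially the paper's proof. It uses the same Bregman shift $h$, the same descent step at $\bx-\frac{1}{L}\nabla h(\bx)$ combined with $h\geq 0$ on all of $\mathbb{R}^d$, and the same first-order optimality condition at $\bx^*$ to drop the linear term. The only difference is that the paper simply asserts that $h$ can be extended to $\mathbb{R}^d$ while staying convex and $L$-smooth, whereas you treat this as a genuine extra hypothesis. Yours is the more careful position: such an extension with the same constant $L$ is not implied by smoothness on $\K$ alone, so it should be stated as an assumption rather than described as ``implicit.''
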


\begin{proof}[Proof of Lemma~\ref{lem:smooth-conv}]
Let us define a new function:
\als
h(\bx)=\f{\bx}-\f{\bx^*}-\dotprod{\df{\bx^*}}{\bx-\bx^*}
\eals
Since $f$ is convex and $L$-smooth, we know that:
\als
0\leq h(\bx)\leq\frac{L}{2}\normsq{\bx-\bx^*}
\eals
The gradient of this function is:
\als
\nabla h(\bx)=\df{\bx}-\df{\bx^*}
\eals
We can see that $h(\bx^*)=0, \nabla h(\bx^*)=0$, and that $x^*$ is the global minimum.
The function $h$ is also convex and $L$-smooth, since the gradient is the same as $f$ up to a constant translation.
We will add to the domain of $h$ to include all $\real^d$, while still being convex and $L$-smooth.
Since $h$ is convex then:
\als
h(\by)\geq h(\bx^*)+\dotprod{\nabla h(\bx^*)}{y-\bx^*}=0,\quad\forall y\in\real^d
\eals
It is true even for points outside of the original domain, meaning that $\bx^*$ remains the global minimum even after this.
For a smooth function, $\forall\bx,\by\in\real^d$:
\als
h(\by)\leq h(\bx)+\dotprod{\nabla h(\bx)}{\by-\bx}+\frac{L}{2}\normsq{\by-\bx}
\eals
By picking $\by=\bx-\frac{1}{L}\nabla h(\bx)$, we get:
\als
h(\bx)-h(\by)\geq\frac{1}{2L}\normsq{\nabla h(\bx)}
\eals
Rearranging, we get:
\als
\normsq{\nabla h(\bx)}\leq2L(h(\bx)-h(\by))\leq2L\cdot h(\bx)
\eals
By using $\bx\in\K$ we get:
\als
\normsq{\df{\bx}-\df{\bx^*}}\leq2L\left(\f{\bx}-\f{\bx^*}-\dotprod{\df{\bx^*}}{\bx-\bx^*}\right)
\eals
Since $\bx^*$ is the minimum point of $f$ then:
\als
\dotprod{\df{\bx^*}}{\bx-\bx^*}\geq0,\quad\forall\bx\in\K
\eals
Thus we get that:
\als
\normsq{\df{\bx}-\df{\bx^*}}\leq2L(\f{\bx}-\f{\bx^*})
\eals
\end{proof}
These two lemmas will help us bound the excess loss for an Anytime algorithm.

\begin{lemma}\label{lem:reg_at}
Let $f:\K\to\real$ be a convex and smooth function with smoothness parameter $L>0$, and define the stochastic error as $\beps_t:=\alpha_t(\bd_t-\df{\bx_t})$.
Consider the Anytime-GD update rule $\bw_{t+1}:=\proj{\bw_t-\eta\alpha_t\bd_t},\bx_{t+1}=\bx_t+\frac{\alpha_{t+1}}{\alpha_{1:t+1}}(\bw_{t+1}-\bx_t)$ with $\{\alpha_t=t\}_{t=1}^T$ and learning rate $\eta\leq\frac{1}{4LT}$, then:
\als
\ExpB{\f{\bx_T}}-\f{\bx^*}\leq\frac{2D^2}{\eta T^2}+\frac{4DG^*}{T}+\frac{4D}{T^2}\sum_{t=1}^T\sqrt{\Exp{\normsq{\beps_t}}}
\eals
Where $\bx^*:=\underset{\bx\in\K}{\arg\min}\f{\bx}$, and $G^*:=\df{\bx^*}$.
\end{lemma}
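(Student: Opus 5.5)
Our plan is to combine the Anytime guarantee (\cref{thm:anytime}) with the OGD regret bound (Lemma~\ref{lem:OGD+_Guarantees}) applied to the vectors $\bq_t=\alpha_t\bd_t$. We then absorb the resulting gradient-difference terms using smoothness through Lemma~\ref{lem:smooth-conv}, so that only the stochastic errors $\beps_t$ remain.

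\textbf{Step 1: reduce to a regret bound.} Apply \cref{thm:anytime} with $\bx=\bx^*$ and $t=T$, using $\alpha_{1:T}=T(T+1)/2\geq T^2/2$. This gives
\[
\tfrac{T^2}{2}\bigl(\f{\bx_T}-f^*\bigr)\leq\sum_{t=1}^T\dotprod{\alpha_t\df{\bx_t}}{\bw_t-\bx^*}.
\]
Write $\alpha_t\df{\bx_t}=\bq_t-\beps_t$, so the right-hand side equals
\[
\sum_t\dotprod{\bq_t}{\bw_{t+1}-\bx^*}+\sum_t\dotprod{\bq_t}{\bw_t-\bw_{t+1}}-\sum_t\dotprod{\beps_t}{\bw_t-\bx^*}.
\]

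\textbf{Step 2: handle the first two sums.} By Lemma~\ref{lem:OGD+_Guarantees},
\[
\sum_t\dotprod{\bq_t}{\bw_{t+1}-\bx^*}\leq\frac{D^2}{2\eta}-\frac{1}{2\eta}\sum_t\normsq{\bw_t-\bw_{t+1}}.
\]
For the middle sum, decompose
\[
\bq_t=\beps_t+\alpha_t(\df{\bx_t}-G^*)+\alpha_t G^*.
\]
\begin{itemize}
\item The $G^*$ part is bounded crudely by $\alpha_t\norm{G^*}\norm{\bw_t-\bw_{t+1}}$. Summed over $t$ and combined with the bounded diameter, it contributes $O(DG^*T)$, which becomes $\frac{4DG^*}{T}$ after dividing by $T^2/2$.
\item The gradient-difference part is handled by Young's inequality against the negative term $-\frac{1}{2\eta}\normsq{\bw_t-\bw_{t+1}}$. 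This leaves $\eta\alpha_t^2\normsq{\df{\bx_t}-G^*}\leq 2L\eta T\,\alpha_t(\f{\bx_t}-f^*)$ by Lemma~\ref{lem:smooth-conv} and $\alpha_t\leq T$.
\item The $\beps_t$ part is bounded by $\norm{\beps_t}D$ via Cauchy--Schwarz on the diameter.
\end{itemize}

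\textbf{Step 3: handle the third sum.} Bound it by $D\sum_t\norm{\beps_t}$ using Cauchy--Schwarz. Taking expectations and applying Jensen's inequality gives $D\sum_t\sqrt{\Exp{\normsq{\beps_t}}}$. Together with the $\beps_t$ part of Step 2, this produces the $\frac{4D}{T^2}\sum_t\sqrt{\Exp{\normsq{\beps_t}}}$ term.

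\textbf{Main obstacle.} The delicate step is absorbing $2L\eta T\sum_t\alpha_t(\f{\bx_t}-f^*)$. It involves losses at all intermediate iterates, not just $\bx_T$. With $\eta\leq\frac{1}{4LT}$ the coefficient is at most $\frac12$, so we need a self-bounding argument. We will first apply the whole chain to every horizon $t\leq T$, which is valid since Lemma~\ref{lem:OGD+_Guarantees} and \cref{thm:anytime} hold for any $t$. Then we will use $\sum_{\tau\leq t}\alpha_\tau(\f{\bx_\tau}-f^*)\leq\sum_{\tau\leq t}\alpha_\tau\dotprod{\df{\bx_\tau}}{\bw_\tau-\bx^*}$, which follows from convexity combined with the Anytime identity, exactly as in the induction in \cref{thm:anytime}. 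This bounds the weighted sum of losses by the same regret quantity, so the $\frac12$-weighted term can be moved to the left-hand side. That move doubles the constants, which accounts for the factor $2$ to $4$ in $\frac{2D^2}{\eta T^2}$, $\frac{4DG^*}{T}$ and the error term. Finally we divide by $\alpha_{1:T}\geq T^2/2$ and take expectations.
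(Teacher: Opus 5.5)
\textbf{The self-bounding step fails.} You rely on $\sum_{\tau\le t}\alpha_\tau(\f{\bx_\tau}-f^*)\le\sum_{\tau\le t}\alpha_\tau\dotprod{\df{\bx_\tau}}{\bw_\tau-\bx^*}$. The induction in \cref{thm:anytime} does not give this; it gives only the last-iterate bound $\alpha_{1:t}(\f{\bx_t}-f^*)\le\sum_{\tau\le t}\alpha_\tau\dotprod{\df{\bx_\tau}}{\bw_\tau-\bx^*}$. Since $\alpha_\tau(\bw_\tau-\bx_\tau)=\alpha_{1:\tau-1}(\bx_\tau-\bx_{\tau-1})$, the right-hand side equals $\sum_\tau\alpha_\tau\dotprod{\df{\bx_\tau}}{\bx_\tau-\bx^*}+\sum_\tau\alpha_{1:\tau-1}\dotprod{\df{\bx_\tau}}{\bx_\tau-\bx_{\tau-1}}$. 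The second sum is negative whenever the losses decrease. For example, take $f(x)=x$ on $\K=[0,1]$ with $\bw_1=1$ and $\bw_\tau=0$ for $\tau\ge2$. Then the right-hand side is $1$, while your left-hand side is $\sum_{\tau\le t}\frac{2}{\tau+1}\approx2\ln t$.

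You also cannot patch this by switching to the correct per-horizon bound while keeping the relaxation $\alpha_t^2\le T\alpha_t$. Write $a_\tau:=\alpha_{1:\tau}\Delta_\tau$ with $\Delta_\tau:=\ExpB{\f{\bx_\tau}}-f^*$. Your residual then becomes $\sum_{\tau\le t}\frac{a_\tau}{\tau+1}$. The inequalities $a_t\le C+\sum_{\tau\le t}\frac{a_\tau}{\tau+1}$ are satisfied with equality by $a_t=C(t+1)$, so they cannot give anything better than $\Delta_T=O(C/T)$ instead of $O(C/T^2)$.

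The paper instead uses $\alpha_\tau^2\le2\alpha_{1:\tau}$ together with Young's inequality with constant $\frac{\eta}{2}$. The residual is then $2L\eta\sum_{\tau\le t}a_\tau\le\frac{1}{2T}\sum_{\tau\le T}a_\tau$, uniformly in $t$ (using $a_\tau\ge0$). This gives $a_t\le\frac{1}{2T}\sum_{\tau\le T}a_\tau+C$ for every $t\le T$, where $C:=\frac{D^2}{2\eta}+\alpha_TD\norm{G^*}+D\sum_{\tau\le T}\sqrt{\Exp{\normsq{\beps_\tau}}}$. Averaging over $t\in[T]$ yields $\frac{1}{2T}\sum_{\tau\le T}a_\tau\le C$, and substituting back gives $a_T\le2C$.

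\textbf{The $G^*$ term has a second gap.} Bounding $\alpha_t\dotprod{G^*}{\bw_t-\bw_{t+1}}\le\alpha_tD\norm{G^*}$ and summing gives $\frac{T(T+1)}{2}D\norm{G^*}$. After dividing by $\alpha_{1:T}$ this is a non-vanishing $\Theta(D\norm{G^*})$ term, not $O(D\norm{G^*}/T)$, because the path length $\sum_t\norm{\bw_t-\bw_{t+1}}$ is not controlled by $D$. What you need is summation by parts, which uses that $G^*$ is a fixed vector: with $\alpha_0=0$,
\[
\sum_{\tau\le t}\alpha_\tau\dotprod{G^*}{\bw_\tau-\bw_{\tau+1}}=\sum_{\tau\le t}(\alpha_\tau-\alpha_{\tau-1})\dotprod{G^*}{\bw_\tau-\bw_{t+1}}\le\alpha_tD\norm{G^*}.
\]

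\textbf{A minor constant issue.} Bounding $\dotprod{\beps_t}{\bw_t-\bw_{t+1}}$ and $-\dotprod{\beps_t}{\bw_t-\bx^*}$ separately gives $\frac{8D}{T^2}$ instead of $\frac{4D}{T^2}$ in front of the error sum. Combine them first into $\dotprod{\beps_t}{\bx^*-\bw_{t+1}}$, as the paper does.
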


\begin{proof}[Proof of Lemma~\ref{lem:reg_at}]
We will define the excess loss at time $t$ as $\Delta_t:=\ExpB{\f{\bx_T}}-\f{\bx^*}$, and follow a similar analysis as in \citet{reshef2024private,reshef2025privacy}.
Start by bounding the excess loss:
\als
\alpha_{1:t}\Delta_t:=&\alpha_{1:t}\left(\ExpB{\f{\bx_t}}-\f{\bx^*}\right)\leq\sum_{\tau=1}^t\alpha_\tau\Exp{\dotprod{\df{\bx_\tau}}{\bw_\tau-\bx^*}} \\
=&\sum_{\tau=1}^t\alpha_\tau\Exp{\dotprod{\df{\bx_\tau}}{\bw_{\tau+1}-\bx^*}}+\sum_{\tau=1}^t\alpha_\tau\Exp{\dotprod{\df{\bx_\tau}}{\bw_\tau-\bw_{\tau+1}}} \\
=&\sum_{\tau=1}^t\Exp{\dotprod{\bq_\tau}{\bw_{\tau+1}-\bx^*}}+\sum_{\tau=1}^t\Exp{\dotprod{\beps_\tau}{\bx^*-\bw_{\tau+1}}}+\sum_{\tau=1}^t\alpha_\tau\Exp{\dotprod{\df{\bx_\tau}}{\bw_\tau-\bw_{\tau+1}}} \\
\leq&\frac{D^2}{2\eta}-\frac{1}{2\eta}\sum_{\tau=1}^t\Exp{\normsq{\bw_\tau-\bw_{\tau+1}}}+\sum_{\tau=1}^t\Exp{\dotprod{\beps_\tau}{\bx^*-\bw_{\tau+1}}}+\sum_{\tau=1}^t\alpha_\tau\Exp{\dotprod{\df{\bx_\tau}}{\bw_\tau-\bw_{\tau+1}}} \\
\leq&\frac{D^2}{2\eta}+\sum_{\tau=1}^t\left(\alpha_\tau\Exp{\dotprod{\df{\bx_\tau}-\df{\bx^*}}{\bw_\tau-\bw_{\tau+1}}-\frac{1}{2\eta}\normsq{\bw_\tau-\bw_{\tau+1}}}\right) \\
+&\sum_{\tau=1}^t\Exp{\dotprod{\alpha_\tau\df{\bx^*}}{\bw_\tau-\bw_{\tau+1}}}+\sum_{\tau=1}^t\Exp{\dotprod{\beps_\tau}{\bx^*-\bw_{\tau+1}}}
\eals

The first inequality is~\Cref{thm:anytime}, then we add and subtract $w_{\tau+1}$ and split it into two sums, split the first sum using $\bq_t=\alpha_t\df{\bx_t}+\beps_t$, bound the first sum using Lemma~\ref{lem:OGD+_Guarantees}, and finally we add and subtract $\df{\bx^*}$.

The first sum can be bounded as such:
\als
&\alpha_\tau\Exp{\dotprod{\df{\bx_\tau}-\df{\bx^*}}{\bw_\tau-\bw_{\tau+1}}}-\frac{1}{2\eta}\Exp{\normsq{\bw_\tau-\bw_{\tau+1}}} \\
\leq&\frac{\eta}{2}\alpha_\tau^2\Exp{\normsq{\df{\bx_\tau}-\df{\bx^*}}}\leq2L\eta\alpha_{1:\tau}\ExpB{\f{\bx_\tau}-\f{\bx^*}}=2L\eta\alpha_{1:\tau}\Delta_\tau
\eals
Where at first we used Young's inequality, and then we bound $\alpha_\tau^2\leq2\alpha_{1:\tau}$ and use Lemma~\ref{lem:smooth-conv}.

For the second sum, we do this:
\als
&\sum_{\tau=1}^t\Exp{\dotprod{\alpha_\tau\df{\bx^*}}{\bw_\tau-\bw_{\tau+1}}}=\sum_{\tau=1}^t(\alpha_\tau-\alpha_{\tau-1})\Exp{\dotprod{\df{\bx^*}}{\bw_\tau}}-\alpha_t\Exp{\dotprod{\df{\bx^*}}{\bw_{t+1}}} \\
=&\sum_{\tau=1}^t(\alpha_\tau-\alpha_{\tau-1})\Exp{\dotprod{\df{\bx^*}}{\bw_\tau-\bw_{t+1}}}\leq\sum_{\tau=1}^t(\alpha_\tau-\alpha_{\tau-1})\norm{\df{\bx^*}}\ExpB{\norm{\bw_\tau-\bw_{t+1}}} \\
\leq&D\norm{\df{\bx^*}}\sum_{\tau=1}^t(\alpha_\tau-\alpha_{\tau-1})=\alpha_t D\norm{\df{\bx^*}}=\alpha_tDG^*
\eals
The first equality is rearrangement of the sum while defining $\alpha_0=0$, then we put the last term into the sum, and use Cauchy-Schwartz, and finally, we use the diameter bound and telescope the sum.

For the third sum we just use Cauchy-Schwartz:
\als
\Exp{\dotprod{\beps_\tau}{\bx^*-\bw_{\tau+1}}}\leq\sqrt{\Exp{\normsq{\beps_\tau}}}\sqrt{\Exp{\normsq{\bx^*-\bw_{\tau+1}}}}\leq D\sqrt{\Exp{\normsq{\beps_\tau}}}
\eals

In total, we get:
\als
\alpha_{1:t}\Delta_t\leq2L\eta\sum_{\tau=1}^t\alpha_{1:\tau}\Delta_\tau+\frac{D^2}{2\eta}+\alpha_tDG^*+D\sum_{\tau=1}^t\sqrt{\Exp{\normsq{\beps_\tau}}}
\eals
Notice that if $\eta\leq\frac{1}{4LT}$, we get that:
\als
\alpha_{1:t}\Delta_t\leq&\frac{1}{2T}\sum_{\tau=1}^t\alpha_{1:\tau}\Delta_\tau+\frac{D^2}{2\eta}+\alpha_tDG^*+D\sum_{\tau=1}^t\sqrt{\Exp{\normsq{\beps_\tau}}} \\
\leq&\frac{1}{2T}\sum_{\tau=1}^T\alpha_{1:\tau}\Delta_\tau+\frac{D^2}{2\eta}+\alpha_TDG^*+D\sum_{\tau=1}^T\sqrt{\Exp{\normsq{\beps_\tau}}}
\eals
Averaging over $t$, we get:
\als
\frac{1}{2T}\sum_{\tau=1}^T\alpha_{1:\tau}\Delta_\tau\leq\frac{D^2}{2\eta}+\alpha_TDG^*+D\sum_{\tau=1}^T\sqrt{\Exp{\normsq{\beps_\tau}}}
\eals
Inputting this back into the previous bound, we get:
\als
\alpha_{1:T}\Delta_T\leq\frac{D^2}{\eta}+2\alpha_TDG^*+2D\sum_{\tau=1}^T\sqrt{\Exp{\normsq{\beps_\tau}}}
\eals
since $\alpha_{1:T}\geq\frac{T^2}{2}$, we get:
\als
\Delta_T\leq\frac{2D^2}{\eta T^2}+\frac{4DG^*}{T}+\frac{4D}{T^2}\sum_{t=1}^T\sqrt{\Exp{\normsq{\beps_t}}}
\eals
\end{proof}
Note that this part in the bound is only reliant on the Anytime-GD mechanism.

Now we just need to bound $\beps_t$.
We will define these momentum increments: $\hat{\bs}_t:=\alpha_{t-\tau_t}\bg_{t-\tau_t}-\alpha_{t-\tau_t-1}\bg_{t-\tau_t-1}$ is the momentum increment given at iteration $t$, $\bs_t:=\alpha_t\bg_t-\alpha_{t-1}\bg_{t-1}$ is the momentum calculated in iteration $t$, with the special case of $\bs_1^{(i)}:=\alpha_1\bg_1^{(i)}$, and $\Bar{\bs}_t:=\alpha_t\df{\bx_t}-\alpha_{t-1}\df{\bx_{t-1}}$ is the expected momentum at iteration $t$.
We also define $\alpha_0=0$.

$\T(t)$ is the group of missing iterations at time $t$, as in the non-convex analysis.
We will also define $I(t)$ to be the group of all machines that had at least one turn up to time $t$, and $i(t)$ will be the machine that played at time $t$.
Note that $I(t)=\{i(t)\}\vee\{i(k)|k\in\T(t)\}$.
We can look at the weighted momentum $\bq_t$ in a new way:
\als
\bq_t:=&\sum_{\tau=1}^t\hat{\bs}_\tau=\sum_{i\in I(t)}\bs_1^{(i)}+\sum_{k\in[t]/\{\T(k)\vee\{1\}\}}\bs_k \\
=&\sum_{i\in I(t)}\left(\bs_1^{(i)}-\Bar{\bs}_1+\Bar{\bs}_1\right)+\sum_{k\in[t]/\{\T(k)\vee\{1\}\}}\left(\bs_k-\Bar{\bs}_k+\Bar{\bs}_k\right) \\
=&\sum_{i\in I(t)}\left(\bs_1^{(i)}-\Bar{\bs}_1\right)+\sum_{k\in[t]/\{\T(k)\vee\{1\}\}}\left(\bs_k-\Bar{\bs}_k\right)+|I(t)|\Bar{\bs}_1+\sum_{k\in[t]/\{\T(k)\vee\{1\}\}}\Bar{\bs}_k \\
=&\sum_{i\in I(t)}\left(\bs_1^{(i)}-\Bar{\bs}_1\right)+\sum_{k\in[t]/\{\T(k)\vee\{1\}\}}\left(\bs_k-\Bar{\bs}_k\right)+\sum_{k=1}^t\Bar{\bs}_k+\sum_{k\in\T(t)}\left(\Bar{\bs}_1-\Bar{\bs}_k\right)
\eals
Where at the final step we used $|I(t)|=|\T(t)|+1$.
Also note that $\sum_{k=1}^t\Bar{\bs}_k=\alpha_t\df{\bx_t}$.
Thus, the weighted momentum error $\beps_t:=\bq_t-\alpha_t\df{\bx_t}$ is:
\als
\beps_t:=\bq_t-\alpha_t\df{\bx_t}=\sum_{i\in I(t)}\left(\bs_1^{(i)}-\Bar{\bs}_1\right)+\sum_{k\in[t]/\{\T(k)\vee\{1\}\}}\left(\bs_k-\Bar{\bs}_k\right)+\sum_{k\in\T(t)}\left(\Bar{\bs}_1-\Bar{\bs}_k\right)
\eals
Due to the Martingale-like quality of $\beps_t$, we know that:
\al\label{eq:eps_parts}
\Exp{\normsq{\beps_t}}=\sum_{i\in I(t)}\Exp{\normsq{\bs_1^{(i)}-\Bar{\bs}_1}}+\sum_{k\in[t]/\{\T(k)\vee\{1\}\}}\Exp{\normsq{\bs_k-\Bar{\bs}_k}}+\Exp{\normsq{\sum_{k\in\T(t)}\left(\Bar{\bs}_1-\Bar{\bs}_k\right)}}
\eal

\begin{lemma}\label{lem:bound_s}
Let $\bs_t=\alpha_t\bg_t-\alpha_{t-1}\Tilde{\bg}_{t-1}$ and $\Bar{\bs}_t=\alpha_t\df{\bx_t}-\alpha_{t-1}\df{\bx_{t-1}}$, with the assumption in~\cref{sec:assum2} and $\alpha_t=t$, then:
\als
\Exp{\normsq{\bs_t-\Bar{\bs}_t}}\leq\left(\sigma+2D\sigmal\right)^2
\eals
\end{lemma}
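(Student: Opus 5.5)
We decompose $\bs_t-\Bar{\bs}_t$ into a pure-noise term at $\bx_t$ and a STORM correction that is controlled by the distance between consecutive iterates. Using $\alpha_t=\alpha_{t-1}+1$ (since $\alpha_t=t$), we rewrite
\als
\bs_t-\Bar{\bs}_t=\left(\bg_t-\df{\bx_t}\right)+\alpha_{t-1}\left[\left(\bg_t-\df{\bx_t}\right)-\left(\Tilde{\bg}_{t-1}-\df{\bx_{t-1}}\right)\right].
\eals
Here both $\bg_t=\df{\bx_t;\bz_t}$ and $\Tilde{\bg}_{t-1}=\df{\bx_{t-1};\bz_t}$ use the same sample $\bz_t$. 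The first term is the plain noise. The bracket is exactly the quantity controlled by the bounded smoothness variance assumption. For $t=1$ we have $\alpha_0=0$, so only the first term survives and the bound $\sigma^2$ is immediate.

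Next we apply Minkowski's inequality in $L^2$ (i.e.\ $\sqrt{\Exp{\normsq{\ba+\bb}}}\le\sqrt{\Exp{\normsq{\ba}}}+\sqrt{\Exp{\normsq{\bb}}}$), which gives exactly the squared-sum form of the claim. The first term contributes at most $\sigma$ by bounded variance. To handle the possibly random $\bx_t,\bx_{t-1}$, we condition on the history: $\bx_t,\bx_{t-1}$ are determined before $\bz_t$ is drawn, and $\bz_t$ is i.i.d. Under this conditioning, bounded smoothness variance gives
\als
\Exp{\normsq{\left(\bg_t-\df{\bx_t}\right)-\left(\Tilde{\bg}_{t-1}-\df{\bx_{t-1}}\right)}}\le\sigmal^2\Exp{\normsq{\bx_t-\bx_{t-1}}}.
\eals

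The key step is the Anytime contraction $\norm{\bx_t-\bx_{t-1}}=\frac{\alpha_t}{\alpha_{1:t}}\norm{\bw_t-\bx_{t-1}}\le\frac{\alpha_t}{\alpha_{1:t}}D=\frac{2D}{t+1}$, which holds since both points lie in $\K$. Therefore
\als
\alpha_{t-1}\sigmal\norm{\bx_t-\bx_{t-1}}\le\frac{2(t-1)}{t+1}D\sigmal\le2D\sigmal.
\eals
Summing the two $L^2$ contributions gives $\sigma+2D\sigmal$, and squaring yields the lemma.

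The main obstacle is justifying the conditioning in the asynchronous, data-dependent-delay setting. The iterates $\bx_t,\bx_{t-1}$ at which the worker evaluates its gradients must be measurable with respect to information that precedes the draw of $\bz_t$. The delay itself may depend on $\bz_t$, but the query points do not: the worker receives $\bx_t$, and already knew $\bx_{t-1}$, before sampling. I would make this explicit by indexing samples by the iteration at which they were queried, and then applying the variance assumptions conditionally on $\bx_t,\bx_{t-1}$.
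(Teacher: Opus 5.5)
Your proposal is correct and follows the paper's proof: the same split of $\bs_t-\Bar{\bs}_t$ into a noise term weighted by $(\alpha_t-\alpha_{t-1})$ and a STORM correction weighted by $\alpha_{t-1}$, then Minkowski's inequality in $L^2$, the bounded-variance and bounded-smoothness-variance assumptions, and the Anytime contraction $\norm{\bx_t-\bx_{t-1}}\le\frac{\alpha_t}{\alpha_{1:t}}D$, which gives the factor $\frac{\alpha_{t-1}\alpha_t}{\alpha_{1:t}}=\frac{2(t-1)}{t+1}\le 2$. Your separate treatment of $t=1$ and your conditioning argument (that $\bx_t$ and $\bx_{t-1}$ are fixed before $\bz_t$ is drawn, even under data-dependent delays) make explicit what the paper uses implicitly.
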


\begin{proof}[Proof of Lemma~\ref{lem:bound_s}]
We will rewrite the momentum increments as $\bs_t=(\alpha_t-\alpha_{t-1})\bg_t+\alpha_{t-1}\left(\bg_t-\Tilde{\bg}_{t-1}\right)$ and $\Bar{\bs}_t=(\alpha_t-\alpha_{t-1})\df{\bx_t}+\alpha_{t-1}\left(\df{\bx_t}-\df{\bx_{t-1}}\right)$
\als
&\sqrt{\Exp{\normsq{\bs_t-\Bar{\bs}_t}}} \\
=&\sqrt{\Exp{\normsq{(\alpha_t-\alpha_{t-1})\left(\bg_t-\df{\bx_t}\right)+\alpha_{t-1}\left(\left(\bg_t-\df{\bx_t}\right)-\left(\Tilde{\bg}_{t-1}-\df{\bx_{t-1}}\right)\right)}}} \\
\leq&(\alpha_t-\alpha_{t-1})\sqrt{\Exp{\normsq{\bg_t-\df{\bx_t}}}}+\alpha_{t-1}\sqrt{\Exp{\normsq{\left(\bg_t-\df{\bx_t}\right)-\left(\Tilde{\bg}_{t-1}-\df{\bx_{t-1}}\right)}}} \\
\leq&(\alpha_t-\alpha_{t-1})\sigma+\alpha_{t-1}\sigmal\sqrt{\Exp{\normsq{\bx_t-\bx_{t-1}}}}\leq(\alpha_t-\alpha_{t-1})\sigma+\frac{\alpha_{t-1}\alpha_t}{\alpha_{1:t}}D\sigmal
\eals
Where the first inequality is Jensen's inequality $\sqrt{\Exp{\normsq{\ba+\bb}}}\leq\sqrt{\Exp{\normsq{\ba}}}+\sqrt{\Exp{\normsq{\bb}}}$, the second inequality is the bounded variance and smoothness variance, and the third inequality is the distance between adjacent queries.
Since we chose $\alpha_t=t$, we get that $\alpha_t-\alpha_{t-1}=1$ and $\frac{\alpha_{t-1}\alpha_t}{\alpha_{1:t}}=\frac{2(t-1)t}{t(t+1)}=2\frac{t-1}{t+1}\leq2$.
In total, we get that:
\als
\Exp{\normsq{\bs_t-\Bar{\bs}_t}}\leq\left(\sigma+2D\sigmal\right)^2
\eals
\end{proof}

We now have all we need to bound $\beps_t$:
\begin{lemma}\label{lem:bound_eps}
Let $\bd_t$ be as in~\Cref{alg:mu2}, and $\beps_t:=\alpha_t(\bd_t-\df{\bx_t})$, with the assumption in~\cref{sec:assum2} and $\alpha_t=t$, with $M$ workers, then:
\als
\Exp{\normsq{\beps_t}}\leq\left(\sigma+2D\sigmal\right)^2t+\left(3DL(M-1)\right)^2
\eals
\end{lemma}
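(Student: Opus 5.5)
Start from the decomposition in \cref{eq:eps_parts}, which splits $\Exp{\normsq{\beps_t}}$ into three groups: the first-step noise terms $\bs_1^{(i)}-\Bar{\bs}_1$ over machines $i\in I(t)$, the later-step noise terms $\bs_k-\Bar{\bs}_k$ over arrived indices $k\neq1$, and the deterministic-type delay bias $\sum_{k\in\T(t)}(\Bar{\bs}_1-\Bar{\bs}_k)$. I would bound each group separately and then add the bounds.

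\textbf{The noise groups.} Every noise term has the form $\bs_k-\Bar{\bs}_k$ (or $\bs_1^{(i)}-\Bar{\bs}_1$). So Lemma~\ref{lem:bound_s} bounds each one by $(\sigma+2D\sigmal)^2$. For the first-step terms this is just the bounded variance, since $\alpha_1=1$ and $\alpha_0=0$.

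Next I count the terms. There are at most $|I(t)|$ first-step terms and at most $t-1-|\T(t)|$ later-step terms. Using $|I(t)|=|\T(t)|+1$, the total count is at most $t$. This gives the contribution $(\sigma+2D\sigmal)^2t$.

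The decomposition in \cref{eq:eps_parts} assumes the cross terms vanish. This rests on the martingale structure: each increment $\bs_k$ uses a fresh sample $\bz_k$ evaluated at the already-determined points $\bx_k,\bx_{k-1}$. That structure is inherited from the way \cref{eq:eps_parts} was stated, so I take it as given here.

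\textbf{The bias group.} This term contains no randomness beyond the iterates, and I bound it pathwise. Since $|\T(t)|\le M-1$, the triangle inequality gives
\[
\norm{\textstyle\sum_{k\in\T(t)}(\Bar{\bs}_1-\Bar{\bs}_k)}\le (M-1)\max_k\norm{\Bar{\bs}_1-\Bar{\bs}_k}.
\]
I will write $\Bar{\bs}_k=\df{\bx_k}+(k-1)(\df{\bx_k}-\df{\bx_{k-1}})$, which uses $\alpha_k-\alpha_{k-1}=1$. Then
\[
\Bar{\bs}_1-\Bar{\bs}_k=(\df{\bx_1}-\df{\bx_k})-(k-1)(\df{\bx_k}-\df{\bx_{k-1}}).
\]

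The first piece is at most $LD$ by smoothness and the diameter bound. For the second piece I use the Anytime contraction $\norm{\bx_k-\bx_{k-1}}\le\frac{\alpha_k}{\alpha_{1:k}}D=\frac{2D}{k+1}$. Together with smoothness this gives $(k-1)L\cdot\frac{2D}{k+1}\le 2LD$.

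Each difference is therefore at most $3LD$, and squaring the sum gives $(3DL(M-1))^2$. Adding the two bounds yields the claim.

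\textbf{Main obstacle.} The delicate step is confirming that the noise terms are genuinely orthogonal when delays depend on the data. The set $\T(t)$ of which indices have arrived is itself random and depends on the samples. I would handle this by bounding the sum over a superset of indices: every $k\le t$ counted once, plus the duplicated first-step contributions. Each summand is then a martingale difference with respect to the natural filtration, and the data-dependent selection only decides which terms appear. This is the same device Lemma~\ref{lem:error-ideal-momentum} uses when it adds the missing terms back to the sum.
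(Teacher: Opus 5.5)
Your argument follows the paper's proof step for step. You use the same split from \cref{eq:eps_parts} and the same count $|I(t)|+(t-1-|\T(t)|)=t$ of noise terms, each bounded by Lemma~\ref{lem:bound_s}. You also use the same pathwise bound $\norm{\Bar{\bs}_1-\Bar{\bs}_k}\le LD+2LD$, from smoothness, the diameter, and $\norm{\bx_k-\bx_{k-1}}\le\frac{2D}{k+1}$.

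The gap is the step that you, like the paper, take for granted: that all cross terms in $\Exp{\normsq{\beps_t}}$ vanish. The repair you propose for it does not work. Write $X_k:=\bs_k-\Bar{\bs}_k$ and let $S$ be the random set of arrived indices. Adding missing indices back is harmless only \emph{after} $\Exp{\normsq{\sum_{k\in S}X_k}}$ has been reduced to $\sum_{k\in S}\Exp{\normsq{X_k}}$; that is the order of operations in Lemma~\ref{lem:error-ideal-momentum}. It cannot produce that reduction. When the delay of index $k$ depends on $\bz_k$, the indicator $\mathbf{1}\{k\notin\T(t)\}$ is itself a function of $\bz_k$ and of the other in-flight samples. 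So $\mathbf{1}\{k\notin\T(t)\}X_k$ is not a martingale difference, and the cross terms survive. Even if the superset device were valid, the superset has up to $t+M-1$ elements, so it would not give the factor $t$.

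There is a second, independent gap. The bias term is a function of the iterates, and the iterates depend on all previously arrived noise. So ``adding the two bounds'' needs a vanishing noise--bias cross term, which you have not justified. Minkowski's inequality only bounds $\sqrt{\Exp{\normsq{\beps_t}}}$ by the sum of the two square roots. That square-root form is, incidentally, the only one used in the proof of Theorem~\ref{thm:mu2}.

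The first gap cannot be closed, because the stated inequality fails under data-dependent delays. Take the following instance:
\begin{itemize}
\item $\K=[0,1]$ (so $D=1$) and $f(x;z)=\xi(z)\,x$ with $\xi=\pm\sigma$ equiprobable. Then $f\equiv0$, every $\Bar{\bs}_k=0$, $\bs_k-\Bar{\bs}_k=\xi(\bz_k)$, and any $L,\sigmal>0$ are admissible constants.
\item $M=2$, where a sample with $\xi=+\sigma$ takes one time unit and one with $\xi=-\sigma$ takes three.
\end{itemize}
Then $\beps_2$ is the sum of the noises of the first two arrivals. If the two initial samples have the same sign, $\beps_2=\pm2\sigma$. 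If they differ, the positive one arrives first. The second arrival is then that worker's fresh sample if it is positive ($\beps_2=2\sigma$), and the other initial sample otherwise ($\beps_2=0$).

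Hence $\Exp{\normsq{\beps_2}}=3\sigma^2$. The claimed bound $2(\sigma+2\sigmal)^2+9L^2$ tends to $2\sigma^2$ as $L,\sigmal\to0$.

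With delays independent of the samples, the noise--noise orthogonality does hold. Combined with Minkowski, it yields $\sqrt{\Exp{\normsq{\beps_t}}}\le(\sigma+2D\sigmal)\sqrt t+3DL(M-1)$. With data-dependent delays, a correct version must add a term for the sample-selected in-flight noise. Neither your proposal nor the paper's proof supplies that argument.
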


\begin{proof}[Proof of Lemma~\ref{lem:bound_eps}]
Continuing from~\cref{eq:eps_parts}, the first two sums include a total of $t$ terms, each bounded by $\left(\sigma+2D\sigmal\right)^2$.
For the last term, we can bound it like this:
\als
&\norm{\sum_{k\in\T(t)}\left(\Bar{\bs}_1-\Bar{\bs}_k\right)}\leq\sum_{k\in\T(t)}\norm{\Bar{\bs}_1-\Bar{\bs}_k} \\
=&\sum_{k\in\T(t)}\norm{\alpha_1\df{\bx_1}-(\alpha_t-\alpha_{t-1})\df{\bx_k}+\alpha_{t-1}\left(\df{\bx_{t-1}}-\df{\bx_t}\right)} \\
\leq&\sum_{k\in\T(t)}\norm{\df{\bx_1}-\df{\bx_k}}+\sum_{k\in\T(t)}\alpha_{t-1}\norm{\df{\bx_{t-1}}-\df{\bx_t}} \\
\leq&L\sum_{k\in\T(t)}\norm{\bx_1-\bx_k}+L\sum_{k\in\T(t)}\alpha_{t-1}\norm{\bx_{t-1}-\bx_t}\leq3DL(M-1)
\eals
Where we bound the norm with Jensen in the first two inequalities, then use the smoothness, and finally use the bounded diameter assumption and the distance between adjacent queries to bound $\norm{\bx_1-\bx_k}\leq D$ and $\alpha_{t-1}\norm{\bx_{t-1}-\bx_t}\leq2D$, and note that $|\T(t)|\leq(M-1)$.
In total, we got that:
\als
\Exp{\normsq{\beps_t}}\leq\left(\sigma+2D\sigmal\right)^2t+\left(3DL(M-1)\right)^2
\eals
\end{proof}

\begin{proof}[Proof of~\cref{thm:mu2}]
Continuing from Lemma~\ref{lem:reg_at}, and using Lemma~\ref{lem:bound_eps} to say that $\sqrt{\Exp{\normsq{\beps_t}}}\leq\left(\sigma+2D\sigmal\right)\sqrt{t}+3DL(M-1)$, we get this bound:
\als
\Delta_T\leq&\frac{2D^2}{\eta T^2}+\frac{4DG^*}{T}+\frac{4D}{T^2}\sum_{t=1}^T\left(\sigma+2D\sigmal\right)\sqrt{t}+\frac{4D}{T^2}\sum_{t=1}^T3DL(M-1) \\
\leq&\frac{2D^2}{\eta T^2}+\frac{4DG^*}{T}+\frac{4D\left(\sigma+2D\sigmal\right)}{\sqrt{T}}+\frac{12D^2L(M-1)}{T}
\eals
Note that our only constraint on the learning rate was $\eta\leq\frac{1}{4LT}$, and as long as we satisfy $\frac{1}{T\eta}\leq\oo{\left(\frac{\sigma}{D}+\sigmal\right)\sqrt{T}+LM}$, we achieve an excess loss bound of $\oo{\frac{D\sigma+D^2\sigmal}{\sqrt{T}}+\frac{D^2LM}{T}}$.
\end{proof}

\section{Experiments}
\label{app:exp}

\subsection{Experimental Setup}
We evaluate the proposed momentum framework on the MNIST~\citep{lecun2010mnist} and CIFAR-10~\citep{krizhevsky2014cifar} datasets in asynchronous training environments with data-dependent delays. 

\paragraph{MNIST Dataset.}
The MNIST dataset consists of 70,000 grayscale images of handwritten digits (0--9) with a resolution of $28\times28$ pixels, split into 60,000 training images and 10,000 test images.

\paragraph{CIFAR-10 Dataset.}
The CIFAR-10 dataset consists of 60,000 color images from 10 object classes, with a resolution of $32\times32$ pixels, split into 50,000 training images and 10,000 test images.

\subsection{Model Architecture}
\begin{itemize}
    \item \textbf{MNIST:} To demonstrate performance on non-Lipschitz objectives, we use a convolutional neural network consisting of two convolutional layers with ReLU activations, each followed by $2\times2$ max-pooling, and two fully connected layers.
A cubic output transformation is applied before the loss function, which induces unbounded gradients and violates the Lipschitz gradient assumption.
 \item \textbf{CIFAR-10:} For CIFAR-10, we fine-tune a ResNet-18 \citep{he2016deep} model pre-trained on ImageNet, using the implementation and pre-trained weights provided by the PyTorch package.
\end{itemize}

\subsection{Baselines}
We compare the proposed Ordered Momentum (\cref{alg:momentum}) and Ordered $\mu^2$-SGD (\cref{alg:mu2}) with constant parameters (\cref{sec:exp}) against the following baselines:
\begin{itemize}
\item Momentum (\cref{eq:naive_asy}, \citet{polyak1964some})
\item $\mu^2$-SGD (\citet{dahanstochastic}, \cref{eq:naive-mu2}).
\item Vanilla SGD (\cref{eq:async-sgd}, \citet{koloskova2022sharper})
\item Delay-Adaptive SGD (\cref{eq:delay-adaptive-sgd}, \citet{mishchenko2022asynchronous}).
\item Delay-Filtered SGD, which discards gradients whose delays exceed a threshold $\tau$, following standard practice \citep{maranjyan2025ringmaster, cohen2021asynchronous, attia2024faster}
\end{itemize}

For all methods, hyperparameters are selected via grid search.
For MNIST, we fix the number of workers to $7$ and sweep the delay threshold $\tau\in\{7,1.5\times7,2\times7\}$.
For CIFAR-10, we use $5$ workers and sweep the delay threshold $\tau\in\{5,1.5\times5,2\times5\}$.
Momentum parameters are chosen from $\gamma\in\{0.9,0.95\}$ for query momentum and $\beta\in\{0.1,0.05,0.01\}$ for gradient momentum.
The learning rate is selected from $\eta\in\{0.1,0.09,0.08,0.07,0.06,0.05,0.04,0.03,0.02,0.01,0.009,\\0.008,0.007,0.006,0.005,0.004,0.003,0.002,0.001\}$, with the best configuration chosen.

\subsection{Data-Dependent Delay Model}
\label{sec:delay-model}
We design a controlled simulation of data-dependent delays to reflect heterogeneous and input-dependent computation times commonly observed in asynchronous training.

\paragraph{Imbalanced worker arrivals.}
We simulate imbalanced arrival patterns by assigning each worker a distinct arrival probability.
Specifically, for $M$ workers indexed by $i\in\{1,\dots,M\}$, the arrival probability of worker $i$ is set proportional to its index, i.e.,
\als
p_i=\frac{i}{\sum_{j=1}^M j}
\eals
so that workers with larger indices are more likely to arrive earlier.
This setup induces heterogeneous delays across workers and captures variability in system-level scheduling.

\paragraph{Data-dependent delay times.}
To model variability in computation time induced by the data itself, we introduce delays that depend explicitly on sample characteristics.

Samples assigned to the \emph{long-delay} group are only made available after a worker-specific delay threshold $\tau_i$, while preserving the overall sampling proportions across groups.
Let $T_i$ denote the waiting time until worker $i$ arrives, which follows a geometric distribution with parameter $p_i$, so that
\als
\prob{T_i>\tau_i}=(1-p_i)^{\tau_i}
\eals
To sample from a prescribed mixture distribution $\D=q_1\D_1+q_2\D_2$, with $q_1+q_2=1$, we select a sample from $\D_1$ if $T_i>\tau_i$ and from $\D_2$ otherwise.
Enforcing $\prob{T_i>\tau_i}=q_1$ yields
\als
\tau_i=\frac{\log(q_1)}{\log(1-p_i)}
\eals
As a result, if worker $i$ arrives after $\tau_i$, we sample $z\sim\D_1$; otherwise, we sample $z\sim \D_2$.
This procedure ensures that long-delay samples are introduced according to the desired mixture proportions while respecting worker-specific arrival patterns.

For example, in our MNIST experiment, we construct a data-dependent delay model by partitioning the dataset into two groups.
The first group, $\mathcal{D}_1$, consists of samples from class~9 and is assigned a mixture weight of $q_1 = 0.1$, while the second component, $\mathcal{D}_2$, contains samples from the remaining classes with weight $q_2 = 0.9$.
This design enforces delayed feedback primarily on class~9 while approximately preserving the overall label distribution.
\cref{fig:data-dependent-delay-model} shows that the proposed data-dependent delay mechanism selectively induces a substantially larger average delay per appearance for class~9, whereas delays remain approximately uniform across the other classes, ensuring that observed performance differences arise from delayed feedback rather than class imbalance.

\begin{figure}[h]
\centering
\begin{minipage}{0.45\linewidth}
\centering
\includegraphics[width=\linewidth]{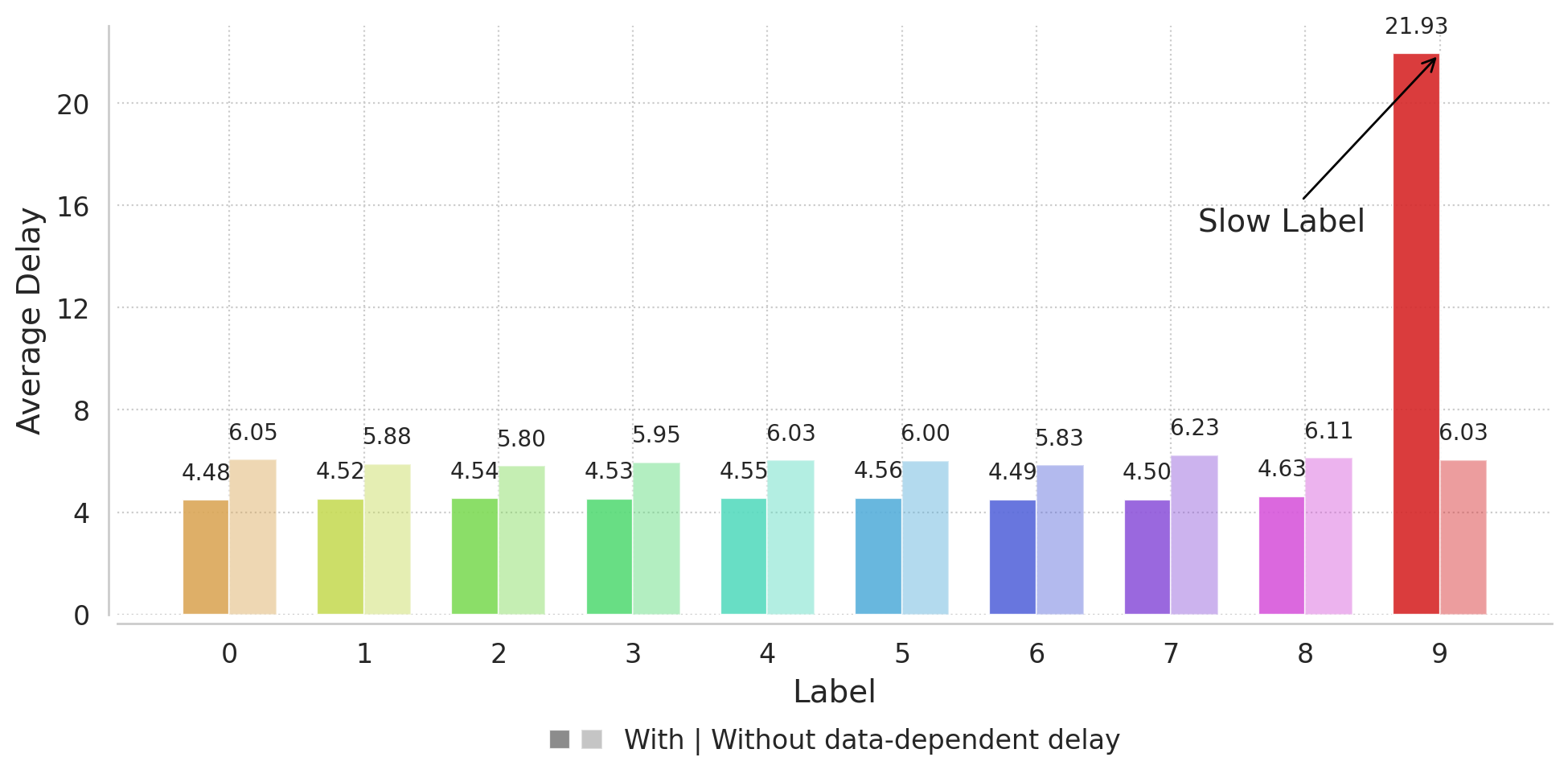}
\caption*{\small (a) Average delay per appearance}
\end{minipage}
\hfill
\begin{minipage}{0.5\linewidth}
\centering
\includegraphics[width=\linewidth]{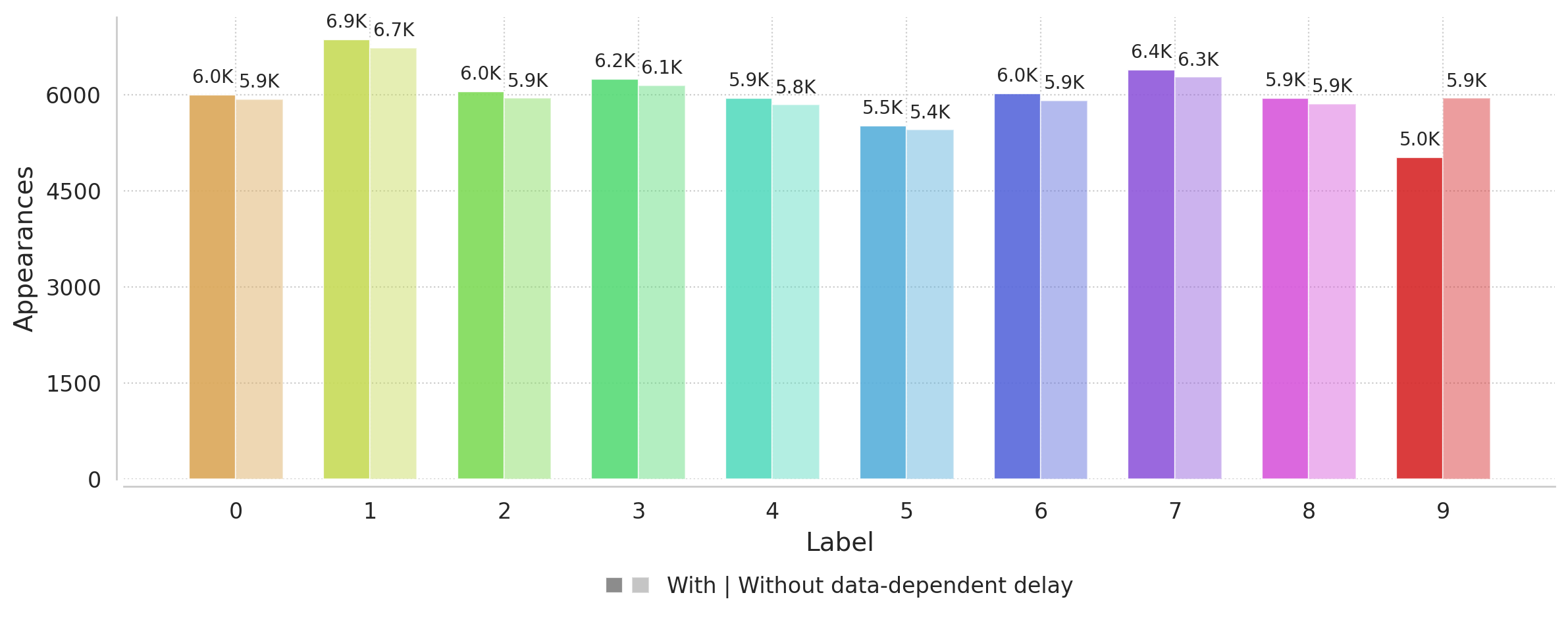}
\caption*{\small (b) Label appearance counts}
\end{minipage}
\caption{
Data-dependent delay model on MNIST.
(a) Average delay per label appearance with and without data-dependent delays, showing selective delay enforcement for label~9.
(b) Number of appearances per label under the same setup, demonstrating that the label distribution is preserved.
}\label{fig:data-dependent-delay-model}
\end{figure}

\subsection{Evaluation Metrics}
\label{app:f1}
We evaluate performance using both test accuracy and the F1 score.
While accuracy reflects overall classification performance, the F1 score is particularly informative in our setting because the delay model induces an effective class imbalance; for example, in the MNIST experiment, samples from class~9 (the long-delay group) are incorporated into training updates with larger staleness than the remaining classes, which can disproportionately degrade recall on that class even when overall accuracy remains high.
Formally, for each class $c\in\{0,\dots,9\}$, let $\mathrm{TP}_c,\mathrm{FP}_c,\mathrm{FN}_c$ denote true positives, false positives, and false negatives.
The per-class F1 is
\als
\mathrm{F1}_c=\frac{2\,\mathrm{TP}_c}{2\,\mathrm{TP}_c+\mathrm{FP}_c+\mathrm{FN}_c}
\eals
and the F1 score is the average
\als
\mathrm{F1}=\frac{1}{10}\sum_{c=0}^{9}\mathrm{F1}_c
\eals

\subsection{MNIST Results}

\begin{figure}[H]
\centering
\includegraphics[width=0.9\linewidth]{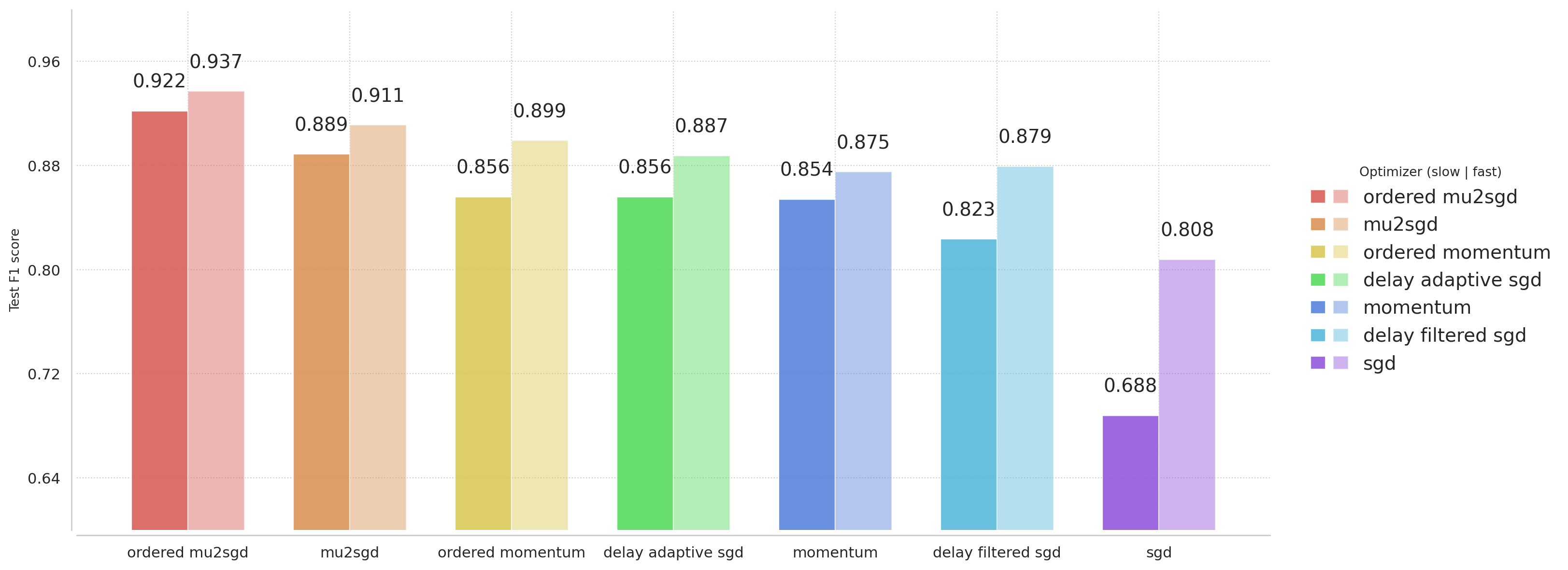}
\caption{
MNIST. Comparison of test F1 scores between the slow label (class~9) and the average F1 over fast labels (classes~0--8) across optimizers, evaluated at the final training iteration.
For each optimizer, paired bars show performance on slow (dark) and fast (light) labels, highlighting differences in optimization behavior.
}\label{fig:f1-slow-vs-fast}
\end{figure}

\vspace{15pt}

\begin{figure}[H]
\centering
\includegraphics[width=0.6\linewidth]{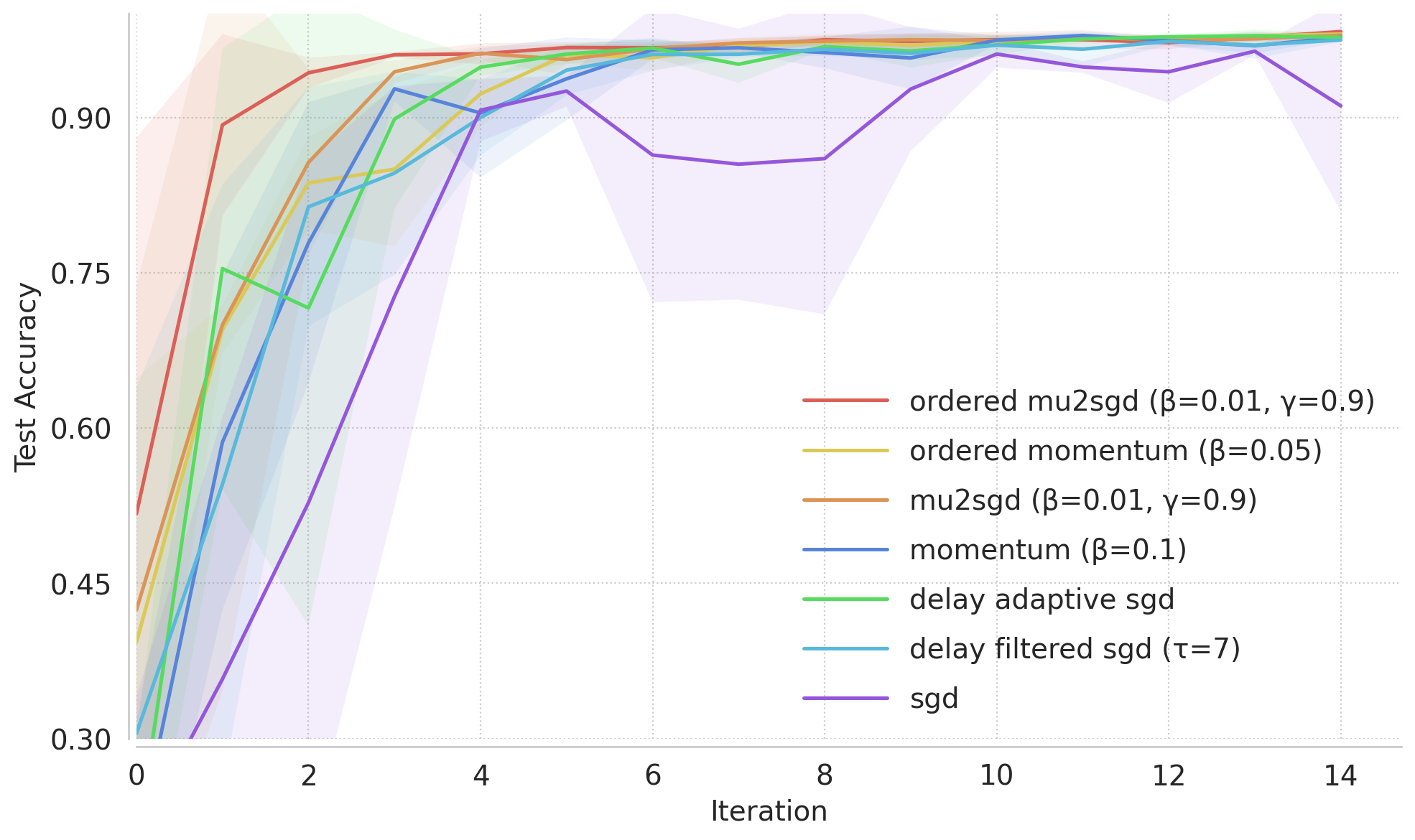}
\caption{
MNIST. Test accuracy over training iterations for the best hyperparameter configuration of each optimizer.
Shaded regions indicate variability across seeds.
}\label{fig:accuracy-over-iterations}
\end{figure}
\vspace{5pt}

\begin{figure}[H]
\centering
\includegraphics[width=0.7\textwidth]{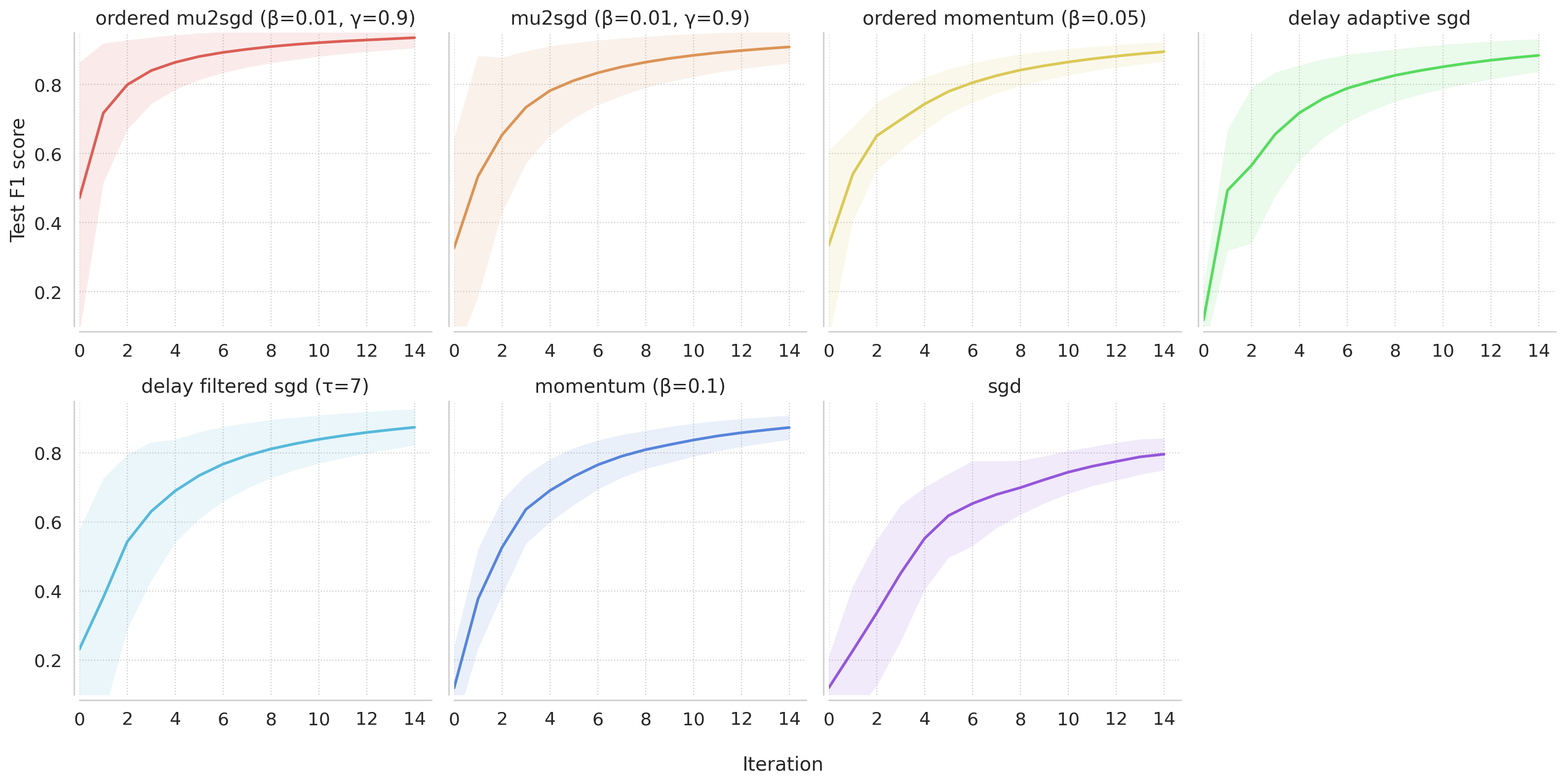}
\caption{
MNIST. Test F1 score over training iterations, shown separately for each optimizer using its best configuration.
Each subplot corresponds to one optimizer, enabling direct comparison of convergence behavior.
}\label{fig:f1-per-optimizer-grid}
\end{figure}

\subsection{CIFAR-10 Results}

\begin{figure}[H]
    \centering
    \includegraphics[width=0.55\linewidth]{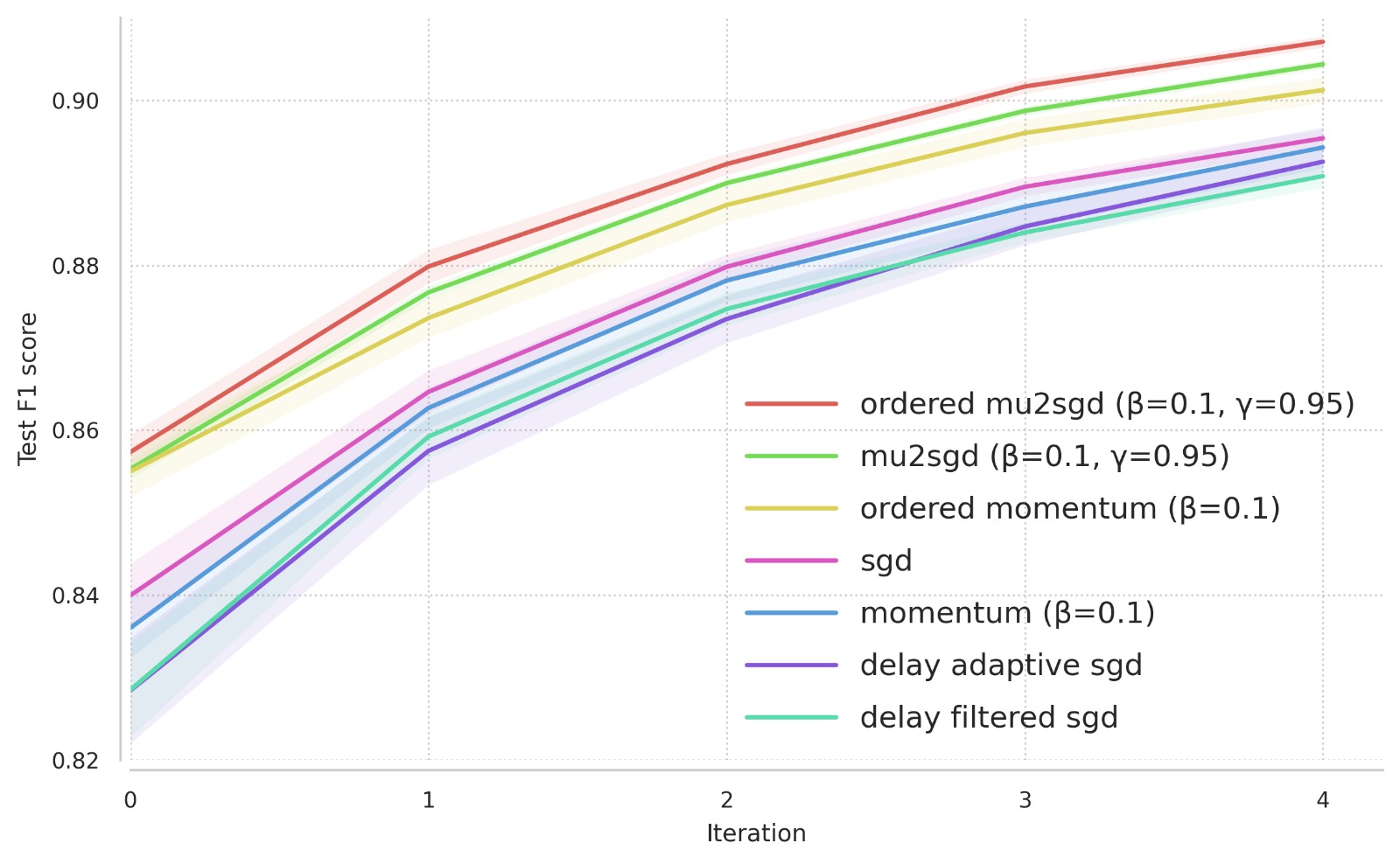}
    \caption{CIFAR-10. Test F1 score over training iterations for fine-tuning ResNet-18 (originally pre-trained on ImageNet), using 5 workers under the same delay model described in \cref{sec:delay-model}. The setup uses heterogeneous label delays, where 50\% of the labels are slow (classes 5--9), and 50\% are fast (classes 0--4). The observed trends are consistent with the MNIST experiment: ordered methods improve performance over standard baselines, with ordered $\mu^2$-SGD achieving the strongest results.}
    \label{fig:cifar10_resnet18_delay}
\end{figure}

\end{document}